\pdfoutput=1
\documentclass{article}

\usepackage[preprint]{neurips_2026}
\usepackage[utf8]{inputenc}
\usepackage[T1]{fontenc}
\usepackage[hidelinks]{hyperref}
\usepackage{url}
\usepackage{booktabs}
\usepackage{amsfonts}
\usepackage{amsmath}
\usepackage{amssymb}
\usepackage{amsthm}
\usepackage{nicefrac}
\usepackage{microtype}
\usepackage{xcolor}
\usepackage{graphicx}
\usepackage{float}
\usepackage{subcaption}
\usepackage{algorithm}
\usepackage{algorithmic}
\usepackage{tikz}
\usepackage{multirow}
\usetikzlibrary{positioning, calc, fit, arrows.meta, decorations.pathreplacing}

\newtheorem{theorem}{Theorem}
\newtheorem{proposition}{Proposition}
\newtheorem{lemma}{Lemma}
\newtheorem{corollary}{Corollary}
\newtheorem{remark}{Remark}

\title{SpecDrop: Parameter-Free Category-Conditioned Routing for Modular Specialization}

\author{%
  Boyao Wang\thanks{Corresponding author.} \\
  Machine Learning Department\\
  Carnegie Mellon University\\
  \texttt{bryanw2@cs.cmu.edu} \\
  \And
  Zhihan Lei \\
  Machine Learning Department\\
  Carnegie Mellon University\\
  \texttt{lexl@cs.cmu.edu} \\
}

\begin{document}

\maketitle

\begin{abstract}
Modular networks such as mixture-of-experts (MoE) pursue specialization through learned routers, gates, and load-balancing losses, yet at matched total-parameter budgets learned routers can underperform equal-weight No-Routing baselines. Is the bottleneck the routing algorithm, or the alignment between training-signal granularity and the target categories? Across four settings spanning vision and language, we find the answer tracks partition granularity, not router design. We probe the question with SpecDrop, a fixed parameter-free routing scheme: each of $K$ branches receives weight $p_{\mathrm{a}}$ for its assigned category and a small leakage $p_{\mathrm{i}}{>}0$ otherwise, merged through a category-independent fixed denominator, with no learned routing parameters and no auxiliary losses; the category label is required at inference. On vision tasks where each image has one superclass label (CIFAR-100 on ResNet-110; ImageNet-1K on ViT-S/16), SpecDrop reaches $\mathbf{79.23\%}$ on CIFAR-100 and $\mathbf{79.89\%}$ on ImageNet-1K, exceeding parameter-matched baselines that do not use the label ($+4.75$ over dense on CIFAR-100; $+6.53$ over the No-Routing+SE control on ImageNet-1K). These gains quantify what category supervision buys when deployed through routing --- not an advantage over label-aware deployments of the baselines: given the same label, masking a dense model's outputs is stronger for accuracy alone ($85.2$ / $83.7$). SpecDrop's contribution is converting the label into trained-in modular structure: $58\%$/$100\%$ branch--category alignment, and masking gains of $0.00$ (CIFAR) / $+1.06$ (ImageNet) --- the output-space restriction is largely internalized during training. SpecDrop also reaches a higher top-1 than every label-free multi-branch routing baseline we evaluate at this parameter budget. On fuzzy partitions, where training units span multiple categories (SlimPajama-6B language modeling with a 30M Transformer; SuperNI instruction tuning over Llama-3.2-1B with LoRA), the routing mechanism reduces to the matched No-Routing controls within seed noise, the null our thesis predicts. Granularity alignment, not algorithm choice, localizes when routing helps. Code: \url{https://github.com/Beryex/SpecDrop}.
\end{abstract}

\section{Introduction}
\label{sec:intro}

Modular neural networks, including mixture-of-experts (MoE)~\citep{jacobs1991adaptive,jiang2024mixtral}, parallel branches~\citep{szegedy2015going,xie2017aggregated}, and multi-head attention~\citep{vaswani2017attention}, decompose model capacity across multiple parameter subsets, each meant to develop concentrated expertise on a portion of the input distribution. When achieved, specialization yields functional decomposition: distinct experts can be inspected, ablated, or selectively deployed.

Three lines of work attempt to recover specialization: \emph{(A)~Learned routing} with auxiliary losses~\citep{fedus2022switch}, \emph{(B)~Fixed-rule routing} with deterministic gates~\citep{roller2021hash}, and \emph{(C)~Input-dependent dropout} at the neuron level~\citep{ba2013adaptive} or block level~\citep{fan2020reducing}. Yet specialization remains elusive: \citet{wang2026illusion} document a persistent ``standing committee'' of generalist experts across sparse MoE variants. All three intervene on the routing function or activation pattern, treating the training signal as a fixed input.

\begin{figure}[!t]
\centering
\includegraphics[width=0.95\linewidth]{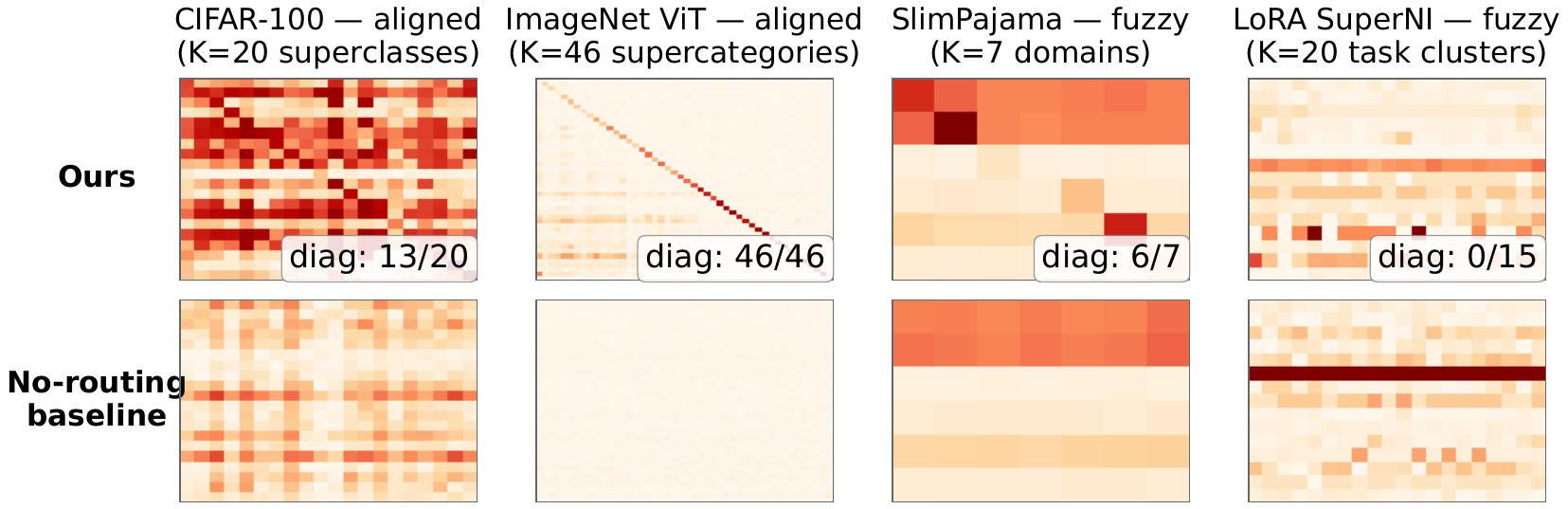}
\caption{\textbf{Per-branch pruning sensitivity across four settings: alignment quality predicts when category-conditioned routing helps.} In each panel, $x$-axis indexes branches, $y$-axis categories; cell darkness shows performance drop when branch $k$ is removed for category-$c$ samples. Diag-argmax $N/M$ counts categories whose most-pruning-sensitive branch matches the assigned branch under round-robin (seed $42$). The aligned vision partitions show dominant diagonals (ViT BREEDS $46/46$; CIFAR $13/20$), the anti-aligned SuperNI/LoRA setting none ($0/15$, counted over the $15$ of $20$ clusters with held-out test tasks, App.~\ref{app:lora_diag}), and fuzzy SlimPajama specializes ($6/7$; its Book domain lacks validation coverage, App.~\ref{app:per_seed_domain}) without an aggregate PPL gain --- so the matched-supervision gains (Tabs.~\ref{tab:cifar_main}--\ref{tab:lora}) are large on the aligned partitions and null on the fuzzy ones, tracking partition granularity rather than diagonal counts alone (Sec.~\ref{sec:nlp}). Each setting independently normalized; cross-panel intensity not directly comparable --- the near-blank ViT No-Routing panel means no branch--category cell rises above noise there.}
\label{fig:specialization}
\end{figure}

As Figure~\ref{fig:specialization} shows, equal-weight No-Routing baselines fail to develop branch-category specialization across all four settings despite identical architecture. To address this gap, we propose \emph{SpecDrop}: a fixed, parameter-free dropout schedule that conditions each module's activation probability on the input's category tag (the assigned module at high probability $p_{\mathrm{a}}$, the rest at a small leakage $p_{\mathrm{i}}{>}0$, merged through a category-independent denominator). The construction has zero learned routing parameters and zero auxiliary losses, yet produces specialization that tracks category clarity.

Whether this category-conditioned specialization translates to performance gains depends on partition alignment: we call a partition \emph{aligned} when each training unit carries one clean category label, and \emph{fuzzy} when training units span multiple categories. On aligned vision partitions (CIFAR-100 and ImageNet-1K, where each image has one superclass), SpecDrop exceeds the parameter-matched baselines that do not use the label and reaches the highest top-1 among the multi-branch routing baselines we evaluate. These gains quantify what category supervision buys when deployed through routing, not an advantage over label-aware deployments: an information-matched masking control (Sec.~\ref{sec:discussion}) shows that, given the same label, masking a dense model's outputs is stronger for accuracy alone; what routing adds is converting the label into trained-in modular structure. On fuzzy partitions (SlimPajama-6B language modeling and SuperNI/LoRA instruction tuning over Llama-3.2-1B), SpecDrop reduces to matched-architecture baselines within seed noise.

\textbf{Contributions.}
\emph{(1)~SpecDrop}: a fixed, parameter-free dropout schedule that conditions module activation on the input's category tag, with zero learned routing parameters and zero auxiliary losses.
\emph{(2)~A category-independent fixed denominator $S$} (Prop.~\ref{thm:fixed_denom}) that makes the train and test forward passes match exactly and calibrates merged-branch magnitude to single-branch scale, so the optional shared expert composes co-equally with the routed mixture.
\emph{(3)~Empirical validation and attribution}: on aligned vision partitions SpecDrop exceeds parameter-matched label-free baselines ($+4.75$ over dense on CIFAR-100, $+6.53$ over the matched No-Routing+SE on ImageNet-1K) and leads the multi-branch routing baselines we evaluate, while an information-matched masking control separates the label's share from routing's and locates SpecDrop's contribution in the induced structure; on fuzzy NLP/LoRA partitions it ties matched-architecture baselines, identifying alignment as the binding condition.

\begin{figure}[!t]
\centering
\includegraphics[width=\linewidth]{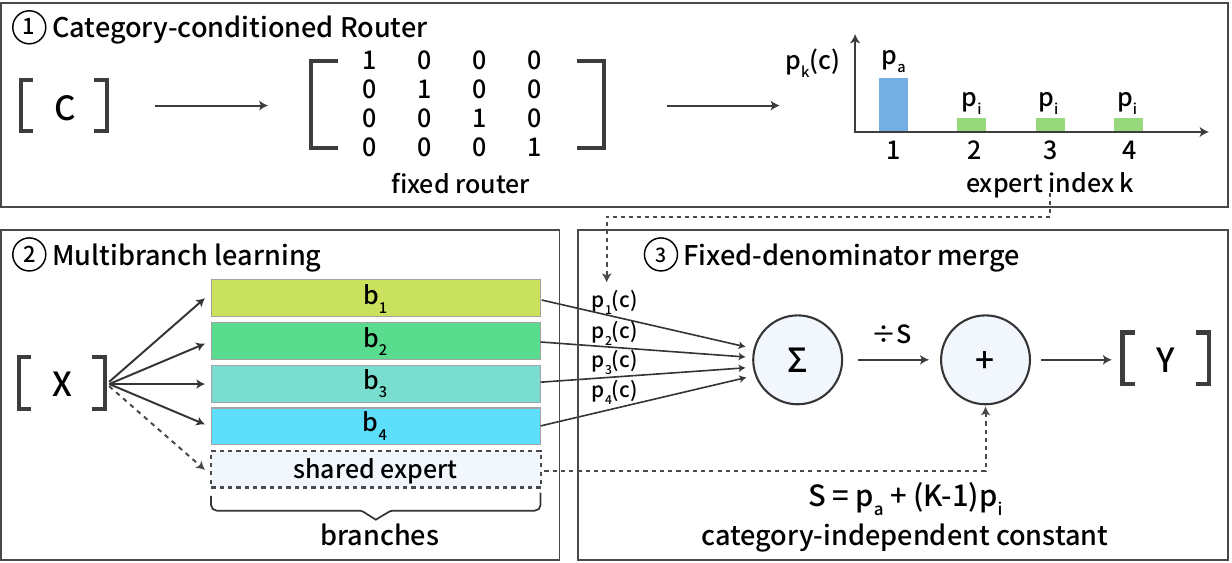}
\caption{\textbf{SpecDrop method overview.} Each input's category tag selects one preferred module via a \emph{fixed, pre-set} assignment matrix $\mathbf{A}$ (round-robin, never updated by gradient descent); the preferred module receives activation probability $p_{\mathrm{a}}$ while the rest receive a small leakage $p_{\mathrm{i}}{>}0$. Routed branches are merged through a category-independent denominator $S = p_{\mathrm{a}} + (K{-}1)p_{\mathrm{i}}$, with an optional always-on shared expert added after normalization (deployed CIFAR: $(p_{\mathrm{a}}, p_{\mathrm{i}}, K) {=} (0.7, 0.3, 20)$, merge weights $(0.109, 0.047) = (p_{\mathrm{a}}, p_{\mathrm{i}})/S$). Zero learned routing parameters, zero auxiliary losses.}
\label{fig:method}
\end{figure}

\section{Related Work}
\label{sec:related}

Three research lines target specialization in modular networks: learned routing, fixed-rule routing, and input-dependent dropout.

\paragraph{Learned routing.} A learned function decides which experts each input activates, typically with auxiliary losses to enforce balanced utilization. Switch~\citep{fedus2022switch} uses top-1 sparse routing; V-MoE~\citep{riquelme2021vmoe} scales per-token top-$k$ routing to vision, and Soft MoE~\citep{puigcerver2024softmoe}, the same family's slot-based successor, replaces hard top-$k$ with continuous slot assignment; Expert Choice~\citep{zhou2022expert} inverts routing direction so each expert chooses its tokens; ReMoE~\citep{wang2025remoe} uses ReLU for adaptive sparsity; Mod-Squad~\citep{chen2023modsquad} adds a mutual-information loss for specialization. A parallel line progressively simplifies the routing machinery itself: StableMoE~\citep{dai2022stablemoe} freezes routing after a distillation stage because routing fluctuation harms training; ST-MoE~\citep{zoph2022stmoe} regularizes router logits for stability; auxiliary-loss-free balancing~\citep{wang2024alf} removes the balancing loss in favor of a bias correction. SpecDrop sits at the limit point of this simplification trajectory, with routing frozen from step~0, no router parameters, and no balancing machinery; it quantifies what this zero-fluctuation limit buys when, and only when, the routing signal is category-aligned.

\paragraph{Fixed-rule routing.} Deterministic rules replace learned gates. Hash Layers~\citep{roller2021hash} hash tokens to experts; MaskMoE~\citep{su2024maskmoe} assigns a fixed random binary mask per vocabulary token, with token frequency controlling the number of visible experts; COMET~\citep{shaier2025comet} uses fixed random projection with $k$-WTA selection. Data-domain variants route by tags: DEMix~\citep{gururangan2022demix} assigns each domain a dedicated expert, while Branch-Train variants~\citep{li2022branch,sukhbaatar2024btx} train separate experts per partition. Hash rules give up category alignment, while hard domain routing sets $p_{\text{inactive}}{=}0$ and severs cross-group gradient flow. SpecDrop conditions on category structure while preserving $p_{\text{inactive}}{>}0$ for cross-category transfer.

\paragraph{Input-dependent dropout.} Activations are stochastically dropped for regularization. Standout~\citep{ba2013adaptive}, Information/Contextual Dropout~\citep{achille2018information,fan2021contextual}, Example-Tied Dropout~\citep{maini2023example}, and NSDropout~\citep{shunk2022nsdropout} operate at the neuron level; Stochastic Depth~\citep{huang2016deep}, LayerDrop~\citep{fan2020reducing}, and SMoE-Dropout~\citep{chen2023smoe} at the block or layer level. SpecDrop instead conditions module activation on category. Even so, specialization remains elusive across MoE variants --- \citet{wang2026illusion} document a persistent ``standing committee'' of generalist experts under sophisticated load-balancing; SpecDrop responds by intervening on training-signal granularity instead of the routing function.

\section{Method}
\label{sec:method}

SpecDrop conditions each module's activation on the input's category tag (Figure~\ref{fig:method}). It combines four elements: \emph{(i)}~a round-robin assignment matrix mapping categories to preferred modules, \emph{(ii)}~a category-conditioned dropout schedule with two probabilities $p_{\mathrm{a}} > p_{\mathrm{i}} > 0$, \emph{(iii)}~a fixed-denominator merge that calibrates branch magnitudes, and \emph{(iv)}~an optional always-on shared expert. We detail each below, then extend to imbalanced categories (\S\ref{sec:per_category}) and specify the warmup schedule (\S\ref{sec:warmup_alignment}).

\subsection{Problem Setup}
\label{sec:setup}

Consider a deep neural network with a shared feature extractor $f_{\text{stem}}$, followed by $K$ parallel modules (branches) $\{g_k\}_{k=1}^K$, and a shared classifier head $f_{\text{head}}$.
Let $\mathcal{C} = \{1, 2, \ldots, M\}$ denote the set of data categories (e.g., superclasses in image classification, domain tags in language modeling); each training sample $(x, y)$ has an associated category label $c(x) \in \mathcal{C}$.
During standard training without routing, all $K$ modules process every input:
\begin{equation}
    \hat{y} = f_{\text{head}}\!\left(\frac{1}{K}\sum_{k=1}^K g_k\bigl(f_{\text{stem}}(x)\bigr)\right).
\end{equation}

\subsection{Category-Conditioned Modular Dropout}
\label{sec:specdrop}

\paragraph{Assignment matrix.}
We define a binary assignment matrix $\mathbf{A} \in \{0, 1\}^{M \times K}$, where $A_{ck} = 1$ indicates that module $k$ is \emph{assigned} to category $c$.
In the simplest case, with 1-based indexing $c \in \{1, \ldots, M\}$ and $k \in \{1, \ldots, K\}$, we use round-robin assignment: $A_{ck} = \mathbf{1}[((c-1) \bmod K) + 1 = k]$, which gives each module $\lfloor M/K \rfloor$ or $\lceil M/K \rceil$ assigned categories and an exactly equal $M/K$ when $K \mid M$ (the case in our deployed settings, $K{\in}\{20,46,7,20\}$).

\paragraph{Activation probabilities.}
For a training sample with category $c$, the activation probability of module $k$ is:
\begin{equation}
    p_k(c) = A_{ck} \cdot p_{\text{active}} + (1 - A_{ck}) \cdot p_{\text{inactive}},
    \label{eq:activation_prob}
\end{equation}
where $p_{\text{active}} \in (0, 1]$ is the keep probability for assigned modules, and $p_{\text{inactive}} \in [0, p_{\text{active}})$ is the keep probability for unassigned modules (abbreviated $p_{\mathrm{a}}$ and $p_{\mathrm{i}}$ throughout).
During training, each module $k$ is independently activated via $m_k \sim \text{Bernoulli}(p_k(c))$.
At inference, $m_k = p_k(c)$ deterministically.

\paragraph{Rationale.}
The nonzero $p_{\text{inactive}} > 0$ preserves cross-category gradient flow rather than hard isolation; Theorem~\ref{thm:gradient} formalizes the resulting specialization.

\subsection{Fixed-Denominator Merge}
\label{sec:fixed_denom}

To combine the $K$ branch outputs $\{h_k = g_k(f_{\text{stem}}(x))\}_{k=1}^K$, we use a \emph{fixed denominator} $S$:
\begin{equation}
    \text{output} = \frac{\sum_{k=1}^K m_k \cdot h_k}{S}, \qquad S = \sum_{k=1}^K p_k(c) = p_{\text{active}} + (K-1) \cdot p_{\text{inactive}},
    \label{eq:fixed_merge}
\end{equation}
where $S$ is \emph{category-independent} under round-robin assignment.
The fixed denominator serves a dual role.
\textbf{(i)~Train--test consistency.} Proposition~\ref{thm:fixed_denom} shows exact match at the merge layer, in contrast to the naive stochastic denominator $\sum_k m_k$ whose Jensen bias $\mathbb{E}[\sum_k m_k h_k / \sum_k m_k] \neq \sum_k p_k h_k / \sum_k p_k$ introduces train--test mismatch (a Bernoulli-variant concern, App.~\ref{app:stoch_soft}; the deployed soft variant relies on \emph{(ii)}).
\textbf{(ii)~Magnitude calibration.} The convex-combination weights place the merged output at single-branch scale, allowing the shared expert to be added co-equally (Sec.~\ref{sec:soft_specdrop}).

\subsection{Canonical Instantiation: Soft SpecDrop}
\label{sec:soft_specdrop}

The activation probabilities $p_k(c)$ admit two instantiations whose forward outputs match in expectation: \emph{Stochastic SpecDrop} samples $m_k \sim \mathrm{Bernoulli}(p_k(c))$ (Appendix~\ref{app:stoch_soft}), while the deployed \emph{Soft SpecDrop} uses $p_k(c)$ directly as deterministic soft weights at both training and inference (no mask sampling); we retain ``SpecDrop'' for the family name while emphasizing that the canonical variant is fixed category-conditioned gating, not stochastic dropout.
We adopt Soft SpecDrop as our canonical variant and augment it with an \emph{always-on shared expert (SE)} $g_{\mathrm{SE}}$, a module disjoint from the $K$ routed branches whose output $h_{\mathrm{SE}} = g_{\mathrm{SE}}(f_{\mathrm{stem}}(x))$ is added \emph{after} the fixed-denominator normalization:
\begin{equation}
    \text{output} = \frac{\sum_{k=1}^K p_k(c)\, h_k}{S} \;+\; h_{\mathrm{SE}}.
    \label{eq:soft_specdrop}
\end{equation}
Soft SpecDrop removes Bernoulli mask-sampling variance, so the $p_k(c)/S$ scaling factor in Theorem~\ref{thm:gradient} is exact at training time (the $p_a/p_i$ ratio still requires the stated base-gradient symmetry); setting $g_{\mathrm{SE}}{\equiv}0$ recovers the pure routed-branches merge (CIFAR-100 config), while the always-on SE is retained by default on imbalanced NLP.

\subsection{Per-Category Routing}
\label{sec:per_category}
\label{sec:branch_allocation}

Under imbalanced category frequencies $\{\pi_c\}$ ($\sum_c \pi_c{=}1$) we generalize $(p_{\mathrm{a}}, p_{\mathrm{i}})$ to per-category $(p_{\mathrm{a}}^c, p_{\mathrm{i}}^c)$ with $\mathrm{gap}_c = (p_{\mathrm{a}} - p_{\mathrm{i}})\cdot[(1{-}\pi_c)/(1{-}1/M)]^{\beta}$, where $\beta \geq 0$ controls how strongly the per-category gap is amplified for rare categories (larger $\beta$ = more amplification):
\begin{equation}
    p_{\mathrm{a}}^c = \tfrac{S}{K} + \mathrm{gap}_c \cdot \tfrac{K-1}{K}, \qquad
    p_{\mathrm{i}}^c = \tfrac{S}{K} - \tfrac{\mathrm{gap}_c}{K}, \qquad
    S^c \;\triangleq\; p_{\mathrm{a}}^c + (K{-}1)\, p_{\mathrm{i}}^c \;=\; S \;\;\forall\, c,\beta,\pi_c
    \label{eq:per_category}
\end{equation}
(invariance proof App.~\ref{app:per_cat_proof}).
Eq.~\ref{eq:fixed_merge} and Prop.~\ref{thm:fixed_denom} apply \emph{verbatim}; only the per-sample weights change.
At $\beta{=}0$ or in balanced settings ($\pi_c{=}1/M$), Eq.~\ref{eq:per_category} reduces to the scalar form $(p_{\mathrm{a}}, p_{\mathrm{i}})$.

\subsection{Training Schedule}
\label{sec:warmup_alignment}

$p_{\mathrm{a}}$ ramps from $S/K$ (yielding a uniform $1/K$ merge at warmup start) to its target value via a cosine schedule over a fraction $w_r \in [0, 1]$ of total training; $p_{\mathrm{i}}$ follows the coupled inverse $p_{\mathrm{i}}(t)=(S-p_{\mathrm{a}}(t))/(K-1)$ to keep $S$ constant. The warmup is applied per-epoch for vision settings and per-step for language settings, matching the learning-rate decay schedule. At inference, $m_k = p_k(c)$ deterministically and the forward pass matches Eq.~\ref{eq:soft_specdrop}.

\section{Theoretical Analysis}
\label{sec:theory}

We state three formal results that characterize SpecDrop's specialization properties.
All proofs, remarks, and a bias--variance argument for soft-vs-hard routing are in Appendix~\ref{app:proofs}.

\begin{theorem}[Gradient Concentration]
\label{thm:gradient}
Under SpecDrop (Eq.~\ref{eq:activation_prob}), the expected gradient magnitude for module $k$ on an assigned category $c$ ($A_{ck}{=}1$) versus an unassigned category $c'$ ($A_{c'k}{=}0$) satisfies $\mathbb{E}_m[\|\partial\mathcal{L}/\partial\theta_k\|\mid c]\,/\,\mathbb{E}_m[\|\partial\mathcal{L}/\partial\theta_k\|\mid c'] = p_{\textup{active}}/p_{\textup{inactive}}$, assuming (i)~the gradient norm is independent of the mask given the input (\emph{mask-independence}), and (ii)~at initialization, all categories produce equal-magnitude base gradients (\emph{category-symmetry at initialization}).
\end{theorem}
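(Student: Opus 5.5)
The plan is a direct chain-rule computation on the merge in Eq.~\ref{eq:fixed_merge}, followed by taking the expectation over the Bernoulli mask and then the ratio. Write $z = f_{\text{stem}}(x)$ and $u = \sum_j m_j h_j / S$, so the loss is $\mathcal{L} = \ell\bigl(f_{\text{head}}(u), y\bigr)$. Since $\theta_k$ enters only through $h_k = g_k(z)$, the chain rule gives
\begin{equation*}
\frac{\partial \mathcal{L}}{\partial \theta_k} = \frac{m_k}{S}\,\Bigl(\frac{\partial h_k}{\partial \theta_k}\Bigr)^{\!\top} \frac{\partial \mathcal{L}}{\partial u} \;\triangleq\; \frac{m_k}{S}\, G_k(x, m),
\end{equation*}
where $G_k$ is the ``base gradient'' that module $k$ would receive at unit merge weight. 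Because $m_k \ge 0$, we get $\|\partial\mathcal{L}/\partial\theta_k\| = (m_k/S)\,\|G_k(x,m)\|$. The category-independent denominator $S$ (round-robin, Eq.~\ref{eq:fixed_merge}) is the same for $c$ and $c'$, so it will cancel in the ratio.

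Next, take the expectation over $m$ conditional on the input and its category. Assumption (i), mask-independence, is what lets me factor $\mathbb{E}_m[m_k \|G_k\| \mid x] = \mathbb{E}[m_k \mid c]\cdot \mathbb{E}_m[\|G_k\|\mid x]$. I will read (i) as saying that $\|G_k(x,m)\|$ does not depend on $m$, or at least is uncorrelated with $m_k$. Then $\mathbb{E}[m_k\mid c] = p_k(c)$ by Eq.~\ref{eq:activation_prob}, which equals $p_{\text{active}}$ when $A_{ck}=1$ and $p_{\text{inactive}}$ when $A_{c'k}=0$. For Soft SpecDrop the mask is deterministic, $m_k = p_k(c)$, so this step is an identity and (i) is not even needed there. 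Averaging over inputs $x$ in category $c$ then gives
\begin{equation*}
\mathbb{E}_m\bigl[\|\partial\mathcal{L}/\partial\theta_k\|\mid c\bigr] = \frac{p_k(c)}{S}\, B_k(c), \qquad B_k(c) \triangleq \mathbb{E}\bigl[\|G_k\| \mid c\bigr].
\end{equation*}

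Finally, assumption (ii), category-symmetry at initialization, states that $B_k(c) = B_k(c')$. Dividing the two conditional expectations cancels $S$ and $B_k$, leaving $p_{\text{active}}/p_{\text{inactive}}$. This requires $p_{\text{inactive}}>0$ so the ratio is defined, which matches the paper's standing assumption $p_{\mathrm{i}}>0$.

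The main obstacle is the factorization step. In reality $\partial\mathcal{L}/\partial u$ depends on the full mask through $u$, so the base-gradient norm is not literally independent of $m$. The proof must therefore state that (i) is being used precisely to assert $\mathbb{E}[m_k\|G_k\|] = p_k(c)\,\mathbb{E}\|G_k\|$, and not claim more. I would also remark that the result is an initialization-time statement. After training, $B_k$ becomes category-dependent, and (ii) is exactly the hypothesis that removes this dependence.
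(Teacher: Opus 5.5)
Your proposal is correct and follows essentially the same route as the paper's proof: the chain rule through the fixed-denominator merge extracts the factor $m_k/S$, mask-independence replaces $m_k$ by its mean $p_k(c)$, and category-symmetry cancels the base-gradient term so that only $p_{\textup{active}}/p_{\textup{inactive}}$ remains. Your added bookkeeping makes explicit three steps the paper leaves implicit: $m_k \ge 0$ when taking norms, the cancellation of the category-independent $S$, and $p_{\textup{inactive}} > 0$ so the ratio is defined.
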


\begin{proposition}[Fixed vs.\ Stochastic Denominator]
\label{thm:fixed_denom}
Let $S = p_{\textup{active}} + (K-1)p_{\textup{inactive}}$; under round-robin, $S = \sum_k p_k(c)$ is \emph{category-independent}.
\textbf{(a)}~The fixed-denominator merge matches train and test exactly: $\mathbb{E}_m[\sum_k m_k h_k / S] = \sum_k p_k(c) h_k / S$ for all $\{h_k\}$ and $c$, with weights $(p_k(c)/S)_k$ forming a convex combination.
\textbf{(b)}~The naive stochastic merge $\sum_k m_k h_k / \sum_k m_k$ (with the convention that the output is $\mathbf{0}$ when $\sum_k m_k {=} 0$) departs from the test-time forward pass: $\mathbb{E}_m[X/N] = \mathbb{E}[X]\,\mathbb{E}[1/N] + \mathrm{Cov}(X, 1/N)$ where $X{=}\sum_k m_k h_k$ and $N{=}\sum_k m_k$, with both an inverse-denominator Jensen gap and non-zero covariance.
\end{proposition}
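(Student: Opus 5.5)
The plan is to treat the three pieces of Proposition~\ref{thm:fixed_denom} separately: first the category-independence of $S$, then part \textbf{(a)} by linearity of expectation, then part \textbf{(b)} by an exact covariance decomposition plus small witnesses showing that neither error term vanishes in general.

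\emph{Category-independence of $S$.} Under round-robin, each row of $\mathbf{A}$ has exactly one nonzero entry: category $c$ is assigned only to module $((c-1)\bmod K)+1$. By Eq.~\ref{eq:activation_prob}, $p_k(c)$ therefore equals $p_{\mathrm{a}}$ for exactly one $k$ and $p_{\mathrm{i}}$ for the remaining $K-1$ indices, so $\sum_k p_k(c) = p_{\mathrm{a}} + (K-1)p_{\mathrm{i}} = S$ for every $c$. The same row-sum argument gives the per-category version in Eq.~\ref{eq:per_category}, where $p_{\mathrm{a}}^c + (K-1)p_{\mathrm{i}}^c$ collapses to $S$ because the two $\mathrm{gap}_c$ terms cancel.

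\emph{Part (a).} For fixed $x$ and $c$, the outputs $h_k$ do not depend on the mask, and $S$ is a deterministic constant. Linearity of expectation then gives $\mathbb{E}_m[\sum_k m_k h_k/S] = \sum_k \mathbb{E}[m_k]\,h_k/S = \sum_k p_k(c)h_k/S$, which is exactly the test-time forward pass with $m_k = p_k(c)$. Independence of the masks is not even needed here. For the convex-combination claim, each weight $p_k(c)/S$ is nonnegative because $p_k(c)\ge 0$ and $S\ge p_{\mathrm{a}}>0$, and the weights sum to $1$ by the previous paragraph.

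\emph{Part (b).} The identity $\mathbb{E}[X/N] = \mathbb{E}[X]\mathbb{E}[1/N] + \mathrm{Cov}(X,1/N)$ is just the definition of covariance applied to $X$ and $Y=1/N$, with $Y:=0$ on the event $\{N=0\}$ to match the stated convention. To show that the stochastic merge departs from the test pass, I will compare $\mathbb{E}[X]\mathbb{E}[1/N] = (\sum_k p_k h_k)\,\mathbb{E}[1/N]$ with the target $(\sum_k p_k h_k)/S$, noting that $\mathbb{E}[N]=S$. Restricted to $\{N>0\}$, Jensen's inequality for the strictly convex map $t\mapsto 1/t$ gives a strict inequality whenever $N$ is non-degenerate, which happens whenever some $p_k\in(0,1)$. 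The $N=0$ convention adds a correction term that must also be controlled.

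The main obstacle is that the sign of the Jensen gap is not uniform, because the mass at $N=0$ pulls $\mathbb{E}[1/N]$ downward. For this reason I will state ``departs'' as ``not equal for generic $\{h_k\}$'' and support it with explicit witnesses rather than a universal inequality. A concrete case is $K=2$, $h_1 = v \neq 0$, $h_2=0$: there $\mathbb{E}[X/N] = \Pr(m_1=1)\,\mathbb{E}[1/N\mid m_1=1]$, which differs from $p_1/(p_1+p_2)$ for the deployed values. For the covariance term, I will use the fact that $X$ and $N$ share the summands $m_k$: when $h_k = h$ for all $k$, the ratio $X/N$ equals $h\,\mathbf{1}[N>0]$ while $\mathbb{E}[X]\mathbb{E}[1/N] = hS\,\mathbb{E}[1/N]$, so the covariance is generically nonzero. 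I will close by noting that for every nondegenerate mask distribution, the set of $\{h_k\}$ on which the two forward passes agree is a proper linear subspace.
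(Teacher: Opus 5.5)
Your category-independence step and part (a) are the paper's argument: linearity of expectation with a constant denominator, exactly one $p_{\mathrm{a}}$ per row of $\mathbf{A}$, and weights summing to one. You additionally make the nonnegativity of the weights explicit and note that mask independence is never used.

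Part (b) is where you genuinely diverge. After the same definitional covariance split, the paper asserts $\mathrm{Cov}(X,1/N)\neq 0$ for generic $\{h_k\}$. It then sizes the Jensen gap by a second-order Taylor expansion, $\mathbb{E}[1/N]\approx 1/S+\mathrm{Var}(N)/S^3$. That gives a size estimate (relative bias $\sim\mathrm{Var}(N)/S^2$), but it is only approximate and ignores the $N=0$ atom, as the paper's own Remark concedes.

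You instead prove exact non-equality with no magnitude, and that is the correct strength. The literal claim of ``both a Jensen gap and a nonzero covariance'' fails at isolated points. For example, $K=2$ with $(p_{\mathrm{a}},p_{\mathrm{i}})=(5/6,2/3)$ gives $\mathbb{E}[1/N]=\tfrac{7}{18}+\tfrac{1}{2}\cdot\tfrac{10}{18}=\tfrac{2}{3}=1/S$, so there the whole departure lives in the covariance. Your strict-Jensen remark on $\{N>0\}$ also needs $N$ to be nondegenerate \emph{given} $N>0$, which fails for $p_{\mathrm{i}}=0$.

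Your two witnesses cover only special configurations, so the closing subspace claim carries the general case and you should write it out. Write $\mathbb{E}[X/N]=\sum_k w_k h_k$ with $w_k=\mathbb{E}[m_k\mathbf{1}[N>0]/N]$. Agreement with the test pass then holds exactly on the linear subspace $\{\sum_k(w_k-p_k/S)h_k=0\}$, which is proper as soon as $w\neq p/S$. There are two cases:
\begin{itemize}
\item If every $p_k<1$, then $\sum_k w_k=\Pr(N>0)=1-\prod_k(1-p_k)<1=\sum_k p_k/S$.
\item If some $p_j=1$, then $w_j=\mathbb{E}[1/N]>1/\mathbb{E}[N]=p_j/S$ by strict Jensen on $N\geq 1$, which is nondegenerate because some $p_k\in(0,1)$.
\end{itemize}
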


\begin{theorem}[Routing-Indicator Mutual Information]
\label{thm:mi}
Let $Z_k \in \{0,1\}$ be the activation indicator for module $k$ under round-robin assignment with $K\,|\,M$ (i.e., $K$ divides $M$) and $C$ uniform on $\{1,\ldots,M\}$.
Then $I(Z_k; C) = \tfrac{1}{K} D_{\textup{KL}}(p_{\textup{active}} \| \bar p) + \tfrac{K-1}{K} D_{\textup{KL}}(p_{\textup{inactive}} \| \bar p)$ exactly, with $\bar p = (p_{\textup{active}} + (K-1)p_{\textup{inactive}})/K$; this quantity vanishes iff $p_{\textup{active}} = p_{\textup{inactive}}$ (random dropout).
\end{theorem}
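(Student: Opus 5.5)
I will compute $I(Z_k;C)$ directly from the decomposition $I(Z_k;C) = \sum_c \Pr(C{=}c)\, D_{\mathrm{KL}}\bigl(P_{Z_k\mid C=c}\,\|\,P_{Z_k}\bigr)$, where every distribution involved is Bernoulli. Conditional on $C=c$, the indicator $Z_k$ is $\mathrm{Bernoulli}(p_k(c))$ by Eq.~\ref{eq:activation_prob}. Under round-robin assignment with $K\mid M$, exactly $M/K$ categories have $A_{ck}=1$ and $M(K-1)/K$ have $A_{ck}=0$.

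\textbf{Step 1 (marginal).} With $C$ uniform, the marginal satisfies
\[
\Pr(Z_k{=}1) = \tfrac{1}{K}p_{\textup{active}} + \tfrac{K-1}{K}p_{\textup{inactive}} = \bar p,
\]
so $Z_k\sim\mathrm{Bernoulli}(\bar p)$. This value is the same for every $k$. It is also $S/K$, consistent with Prop.~\ref{thm:fixed_denom}.

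\textbf{Step 2 (decomposition).} The conditional law depends on $c$ only through $A_{ck}$. The categories therefore split into two groups:
\begin{itemize}
\item total mass $1/K$ with conditional law $\mathrm{Bernoulli}(p_{\textup{active}})$;
\item total mass $(K-1)/K$ with conditional law $\mathrm{Bernoulli}(p_{\textup{inactive}})$.
\end{itemize}
Substituting these into the KL decomposition gives the stated formula exactly. Here $D_{\mathrm{KL}}(p\|q)$ denotes the KL divergence between Bernoulli laws. The result is stated in nats or bits according to the log base, and we use the standard conventions $0\log 0=0$ when $p_{\textup{active}}=1$ or $p_{\textup{inactive}}=0$. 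One caveat: $Z_k$ is a Bernoulli mask, which only makes sense in the stochastic variant, so the proof should note that it treats $Z_k$ as that mask.

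\textbf{Step 3 (vanishing condition).} Both weights $1/K$ and $(K-1)/K$ are strictly positive, assuming $K\ge 2$. Each KL term is nonnegative and vanishes iff its two arguments are equal. Hence $I=0$ iff $p_{\textup{active}}=\bar p$ and $p_{\textup{inactive}}=\bar p$. Since $\bar p$ is a strict convex combination of the two, either equality is equivalent to $p_{\textup{active}}=p_{\textup{inactive}}$. Conversely, if $p_{\textup{active}}=p_{\textup{inactive}}$, then $Z_k$ is independent of $C$, so $I=0$.

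The only mildly delicate point is the edge case $K=1$. Then the second weight is zero and $\bar p=p_{\textup{active}}$, so the ``iff'' fails. I would state $K\ge2$ explicitly, since it is implicit in having $p_{\textup{inactive}}$ matter at all. Otherwise the argument is a direct computation with no real obstacle.
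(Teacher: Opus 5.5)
Your proof is correct, and it reaches the formula by a more direct route than the paper.

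The paper starts from $I(Z_k;C)=H(Z_k)-H(Z_k\mid C)=H_b(\bar p)-\tfrac1K H_b(p_{\textup{active}})-\tfrac{K-1}{K}H_b(p_{\textup{inactive}})$. It then applies the identity $H_b(\bar p)-H_b(p)=D_{\mathrm{KL}}(p\|\bar p)+(\bar p-p)\log\frac{1-\bar p}{\bar p}$ to each term. Finally it checks that the weighted remainder cancels because $\bar p$ is the mixture mean.

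You instead invoke $I(Z_k;C)=\mathbb{E}_C\bigl[D_{\mathrm{KL}}(P_{Z_k\mid C}\,\|\,P_{Z_k})\bigr]$, so the two-group structure gives the formula in one line. The paper's cancellation step is exactly the derivation of that identity, specialized to a two-component Bernoulli mixture. Your route avoids the log-odds term $\log\frac{1-\bar p}{\bar p}$, which presupposes $\bar p\in(0,1)$. The paper's route is self-contained from the entropy definition, at the cost of an extra identity and a cancellation check.

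Your Step~3 also does more than the paper. The paper only checks the ``if'' direction: equal probabilities make every KL term vanish. You also obtain ``only if'' from Gibbs' inequality plus the strict-convex-combination argument.

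Your $K\ge 2$ caveat is a genuine refinement of the statement as written. For $K=1$, every category is assigned to the single module, so $Z_1\sim\mathrm{Bernoulli}(p_{\textup{active}})$ independently of $C$. Then $I=0$ regardless of $p_{\textup{inactive}}$, and the ``only if'' fails.
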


\begin{figure}[!t]
\centering
\begin{tikzpicture}[font=\small, >=Stealth]
  \node (loss) at (2.55, 1.70) [draw, rounded corners, minimum width=2.0cm, minimum height=0.55cm] {loss $\mathcal{L}$};
  \node (b1) at (0.0, 0) [draw, fill=blue!12, minimum width=1.05cm, minimum height=0.6cm] {$g_1$};
  \node (b2) at (1.7, 0) [draw, minimum width=1.05cm, minimum height=0.6cm] {$g_2$};
  \node (b3) at (3.4, 0) [draw, minimum width=1.05cm, minimum height=0.6cm] {$g_3$};
  \node (b4) at (5.1, 0) [draw, minimum width=1.05cm, minimum height=0.6cm] {$g_4$};
  \draw[->, line width=1.8pt, blue!60!black] (loss.south) -- (b1.north);
  \draw[->, line width=0.6pt, gray] (loss.south) -- (b2.north);
  \draw[->, line width=0.6pt, gray] (loss.south) -- (b3.north);
  \draw[->, line width=0.6pt, gray] (loss.south) -- (b4.north);
  \node[blue!60!black, anchor=east] at (1.28, 1.15) {$\nicefrac{p_{\mathrm{a}}}{S}$};
  \node[gray!40!black, anchor=west] at (4.00, 1.15) {$\nicefrac{p_{\mathrm{i}}}{S}$};
  \node[anchor=north, gray!50!black] at (2.55, -0.45) {sample of category $c$ with $A_{c1}{=}1$: assigned branch $g_1$, unassigned $g_2,\ldots,g_K$ (shown: $K{=}4$)};
\end{tikzpicture}
\caption{\textbf{The gradient-concentration mechanism (Thm.~\ref{thm:gradient}).} Backpropagated gradient magnitude for branch $k$ scales with its activation probability $p_k(c)/S$, so over training each branch accumulates $p_{\mathrm{a}}/p_{\mathrm{i}}$ more gradient signal from its assigned categories than from unassigned ones, while the nonzero $p_{\mathrm{i}}$ preserves cross-category flow.}
\label{fig:grad_concentration}
\end{figure}

\paragraph{From theory to design.}
Thm.~\ref{thm:gradient} predicts $p_{\textup{active}}/p_{\textup{inactive}}$ as the specialization-driving gradient ratio (Fig.~\ref{fig:grad_concentration}), directionally consistent with the $1.87\times$ post-training pruning-sensitivity ratio on CIFAR (Sec.~\ref{sec:specialization}). Prop.~\ref{thm:fixed_denom} justifies the fixed-denominator merge through exact train--test consistency, Jensen-bias quantification, and magnitude calibration for the shared expert (Sec.~\ref{sec:soft_specdrop}). Thm.~\ref{thm:mi} grounds the Bernoulli variant information-theoretically (App.~\ref{app:stoch_soft}); Soft specialization is measured empirically by pruning sensitivity.

\section{Experiments}
\label{sec:experiments}

We evaluate SpecDrop across four settings spanning vision and language; we call a partition \emph{balanced} when training counts are equal across categories and \emph{imbalanced} otherwise. All numbers are mean$\pm$std over 3 seeds (42, 123, 456). Alongside each setting's primary metric we report \emph{branch--category alignment} (Align, \%): the percentage of categories whose most-pruning-sensitive branch coincides with its assigned branch; a high value indicates that the trained model's emergent specialization matches the imposed partition. Align is a property of the trained model, distinct from the data-side partition-alignment axis of Sec.~\ref{sec:intro}: a partition can be fuzzy while Align is high, as on SlimPajama. Per-setting details (datasets, backbones, baselines, training) are given in the corresponding subsections; full per-baseline protocols, branch derivations, BREEDS construction, ROUGE-L selection metric, and additional disclosures are in App.~\ref{app:baseline_adaptations}.

\subsection{CIFAR-100 Results}
\label{sec:cifar_results}

\paragraph{Setup.}
We evaluate on CIFAR-100~\citep{krizhevsky2009cifar} using its $M{=}K{=}20$ balanced superclasses on ResNet-110~\citep{he2016deep} ($\sim$1.7M params), comparing Soft SpecDrop against dense ResNet-110, Stochastic Depth~\citep{huang2016deep}, Example-Tied Dropout~\citep{maini2023example}, Contextual Dropout~\citep{fan2021contextual}, and an architecture-matched No-Routing variant. All methods train with SGD at learning rate $0.1$ on a cosine schedule, batch $128$, for $200$ epochs.

\begin{table}[!htbp]
\centering
\small
\caption{CIFAR-100 baseline comparison. \emph{HardCategory} ($p_{\mathrm{a}}{=}1, p_{\mathrm{i}}{=}0$, DEMix-style) isolates soft routing's $p_{\mathrm{i}}{>}0$ contribution; both Soft SpecDrop and HardCategory consume the $20$-superclass label at inference. No-Routing's $63.08$ reflects narrow-branch capacity (Sec.~\ref{sec:cifar_results}); Align chance level is $5\%$ at $K{=}20$.}
\label{tab:cifar_main}
\begin{tabular*}{\textwidth}{@{\extracolsep{\fill}}lcccc@{}}
\toprule
\textbf{Method} & \textbf{Backbone} & \textbf{\#Params} & \textbf{Top-1 (\%) $\uparrow$} & \textbf{Align (\%) $\uparrow$} \\
\midrule
ResNet-110              & \multirow{4}{*}{dense ResNet-110}     & 1.737M & $74.48 \pm 0.13$ & --- \\
Stochastic Depth        &                                       & 1.737M & $75.80 \pm 0.16$ & --- \\
Example-Tied Dropout    &                                       & 1.737M & $63.68 \pm 0.97$ & --- \\
Contextual Dropout      &                                       & 1.764M & $70.25 \pm 0.28$ & --- \\
\midrule
No-Routing              & \multirow{3}{*}{MultiBranch $K{=}20$} & 1.721M & $63.08 \pm 0.04$ & $3.3 \pm 2.9$ \\
HardCategory            &                                       & 1.721M & $57.67 \pm 3.48$ & $\mathbf{100.0 \pm 0.0}$ \\
\textbf{Soft SpecDrop}  &                                       & 1.721M & $\mathbf{79.23 \pm 0.17}$ & $58.3 \pm 14.4$ \\
\bottomrule
\end{tabular*}
\end{table}

\paragraph{Results.}
Table~\ref{tab:cifar_main} reports CIFAR-100 top-1 accuracy across five baselines and Soft SpecDrop.
Soft SpecDrop achieves $\mathbf{79.23 \pm 0.17}\%$, exceeding dense ResNet-110 by $+4.75$ and Stochastic Depth by $+3.43$.

\paragraph{Routing-isolated comparison and HardCategory ablation.}
\label{sec:ablation_main}
The architecture-matched No-Routing baseline at $K{=}20$ ($63.08\%$, constrained by narrow-branch capacity, channels $[4,7,14]$) isolates the routing contribution under the partition-aligned regime: Soft SpecDrop's $\mathbf{+16.15}$ over it comes from the fixed-denominator merge with category-conditioned dropout. HardCategory uses one-hot routing instead of soft on the same metadata, collapsing to $57.67\pm 3.48$ ($5.41$ below No-Routing, $20\times$ ours' std); our Align reaches $58.3\%$ (vs $5\%$ chance, $3.3\%$ No-Routing) while HardCategory's tautological $100\%$ co-occurs with the worst top-1, isolating soft leakage $p_i{>}0$ as the active ingredient (not metadata access or alignment). Baseline implementations in App.~\ref{app:baseline_adaptations}.

\subsection{ImageNet ViT Results}
\label{sec:vit}

\paragraph{Setup.}
We evaluate on ImageNet-1K~\citep{deng2009imagenet} partitioned into $M{=}46$ imbalanced supercategories via our recursive expansion of the BREEDS~\citep{santurkar2021breeds} curated WordNet hierarchy with hyperparameters $T{=}60$ (max-leaves) and $C{=}10$ (max-children); the full algorithm and per-supercategory sizes are in App.~\ref{app:breeds_construction}. All methods are based on ViT-Small/16~\citep{dosovitskiy2021vit, touvron2021deit} ($\sim$22M params). We compare Soft SpecDrop against dense ViT-S/16, Soft MoE~\citep{puigcerver2024softmoe} in a tuned configuration (paper-canonical second-half placement, learning rate $5{\times}10^{-4}$, selected by the dedicated sweep of App.~\ref{app:vit_miniablation}) plus a compute-matched variant whose parameters are unconstrained, an auxiliary-loss-free top-$k$ router~\citep{wang2024alf} capacity-identical to our Mod-Squad configuration, Mod-Squad~\citep{chen2023modsquad} (FFN-only, adapted to ImageNet BREEDS), COMET~\citep{shaier2025comet}, and an architecture-matched No-Routing+SE baseline. All methods train with AdamW at learning rate $2.5{\times}10^{-4}$ (the tuned Soft MoE at its swept $5{\times}10^{-4}$) on a cosine schedule, batch $256$, for $100$ epochs of the DeiT short recipe (no EMA, no RepeatedAugmentation).

\paragraph{Results.}
Soft SpecDrop reaches $\mathbf{79.89 \pm 0.18}\%$ top-1 on ImageNet-1K BREEDS (Tab.~\ref{tab:vit}), $+6.53$ over the matched-supervision No-Routing+SE baseline and $+3.20$ over the strongest learned router, the tuned Soft MoE ($76.69 \pm 0.70$, itself above dense); the SE contributes $+2.06$. The compute-matched Soft MoE (dense-level MACs, parameters unconstrained) lands at $66.72$, below the bare No-Routing control: the deployed configuration's constraint was placement, not compute (App.~\ref{app:vit_miniablation}).

\begin{table}[!htbp]
\centering
\small
\caption{ImageNet-1K BREEDS-46 on MultiBranch ViT-Small/16 (henceforth MB-ViT), with every baseline's parameter and compute status stated per row. \emph{Parameter status} vs dense ViT-S ($22.051$M): tuned Soft MoE $+0.7\%$, ALF router $+0.6\%$, Mod-Squad $+1.0\%$, COMET $+0.0\%$, No-Routing $+0.2\%$, No-Routing$+$SE and ours $+1.0\%$, all within $\pm 1\%$; the compute-matched Soft MoE's parameters are \emph{unconstrained by design} ($481$M, $21.8\times$ dense). \emph{Compute status} is the MACs column (App.~\ref{app:flops}): ours and the No-Routing$+$SE control are compute-identical ($4.25$G, matching dense); the deployed all-blocks Soft MoE ($63.06 \pm 0.22$, retained as an appendix ablation, App.~\ref{app:vit_miniablation}) runs at $1.56$G under parameter matching, and the compute-matched variant restores dense-level MACs. Soft MoE rows use a dedicated SoftMoEViT implementation (\citealp{puigcerver2024softmoe}, Alg.~1); the tuned row uses the paper-canonical second-half placement and lr $5{\times}10^{-4}$ from a dedicated sweep (App.~\ref{app:vit_miniablation}), the only one any method received. The ALF router~\citep{wang2024alf} is capacity-identical to Mod-Squad, isolating the balancing mechanism. The comparison to dense is not supervision-controlled (BREEDS labels are target-derived; Sec.~\ref{sec:vit}).}
\label{tab:vit}
\resizebox{\textwidth}{!}{%
\begin{tabular}{@{}lccccc@{}}
\toprule
\textbf{Method} & \textbf{Backbone} & \textbf{\#Params} & \textbf{MACs (G)} & \textbf{Top-1 (\%) $\uparrow$} & \textbf{Align (\%) $\uparrow$} \\
\midrule
ViT-Small/16              & Dense ViT-S/16                                       & 22.051M & 4.25 & $76.38 \pm 0.15$ & --- \\
Soft MoE (tuned)          & SoftMoEViT $N{=}32$ 2nd-half                         & 22.196M & 2.91 & $76.69 \pm 0.70$ & --- \\
Soft MoE (comp.-matched)  & SoftMoEViT $N{=}32$ wide                             & 481.2M  & 4.26 & $66.72 \pm 0.63$ & --- \\
ALF top-$k$ router        & \multirow{2}{*}{MoE-ViT $N{=}16$, top-$2$}           & 22.194M & 4.27 & $71.09 \pm 0.32$ & --- \\
Mod-Squad                 &                                                      & 22.267M & 4.27 & $70.11 \pm 0.61$ & $1.4 \pm 1.3$ \\
\midrule
COMET                     & \multirow{2}{*}{MultiBranch $K{=}46$}                & 22.051M & 5.65 & $71.02 \pm 0.17$ & --- \\
No-Routing                &                                                      & 22.092M & 4.22 & $71.30 \pm 0.22$ & $5.1 \pm 4.5$ \\
\midrule
No-Routing                & \multirow{2}{*}{MultiBranch $K{=}46$ $+$ SE}         & 22.263M & 4.25 & $73.36 \pm 0.29$ & $2.9 \pm 2.5$ \\
\textbf{Soft SpecDrop}    &                                                      & 22.263M & 4.25 & $\mathbf{79.89 \pm 0.18}$ & $\mathbf{100.0 \pm 0.0}$ \\
\bottomrule
\end{tabular}}
\end{table}

\paragraph{Routing-isolated comparison and BREEDS label-leak.}
The proper isolated baseline is MB-ViT No-Routing$+$SE: ours beats it by $\mathbf{+6.53}$ under matched supervision; relative to bare No-Routing ($71.30$), the SE contributes $+2.06$ and the routing mechanism contributes the remaining $+6.53$ on top of matched-SE. Pruning-sensitivity confirms mechanism-driven: ours $100\%$ Align (vs $2.2\%$ chance at $K{=}46$); matched-SE No-Routing $2.9\%$, Mod-Squad $1.4\%$ --- partition-aligned specialization, not capacity. The $+3.51$ over dense is not directly comparable: BREEDS supercategories are constructed from fine labels~\citep{santurkar2021breeds}, leaking supervision that dense does not consume.

\subsection{NLP Domain-Conditioned Routing}
\label{sec:nlp}

\paragraph{Setup.}
We evaluate on SlimPajama-6B language modeling using its $M{=}K{=}7$ imbalanced document domains as categories (${\sim}15\times$ max/min ratio), with all methods based on a 6-layer / 384-hidden / 6-head Transformer LM ($\sim$30M params). We compare Soft SpecDrop against the dense Transformer~\citep{vaswani2017attention}, Switch~\citep{fedus2022switch}, Hash Layers~\citep{roller2021hash}, SMoE-Dropout~\citep{chen2023smoe}, DEMix~\citep{gururangan2022demix}, and an architecture-matched No-Routing$+$SE baseline. All methods train with AdamW at learning rate $3{\times}10^{-4}$ on a cosine schedule in bf16, batch $32$ sequences of $512$ tokens, for $10$ epochs over $500$M unique tokens.

\paragraph{Results.}
Table~\ref{tab:nlp} reports validation perplexity. Soft SpecDrop achieves $\mathbf{45.38 \pm 0.02}$ PPL; ours trails matched-SE No-Routing by $+0.10$ PPL (per-seed deltas $+0.05/+0.02/+0.22$, App.~\ref{app:additional}), within the seed-noise envelope. Within the multi-branch architecture, the always-on shared expert contributes $1.52$ PPL (matched-SE $45.28$ vs bare No-Routing $46.80$); ours nonetheless achieves the lowest perplexity among learned and fixed-rule routers.

\begin{table}[!htbp]
\centering
\small
\caption{SlimPajama-6B 7-domain language modeling on a 30M Transformer LM. SMoE-Dropout uses the gradual-$k$ schedule of \citeauthor{chen2023smoe} (their Fig.~2) starting from a single active expert.}
\label{tab:nlp}
\begin{tabular*}{\textwidth}{@{\extracolsep{\fill}}lcccc@{}}
\toprule
\textbf{Method} & \textbf{Backbone} & \textbf{\#Params} & \textbf{Val PPL $\downarrow$} & \textbf{Align (\%) $\uparrow$} \\
\midrule
Dense Transformer        & Dense Transformer LM                              & 30.143M & $\mathbf{44.80 \pm 0.05}$ & --- \\
\midrule
Hash Layers              & MultiBranch $K{=}8$                               & 30.159M & $52.05 \pm 0.06$ & $44.4 \pm 9.6$ \\
SMoE-Dropout             & MultiBranch $K{=}16$                              & 30.214M & $67.32 \pm 0.50$ & $5.6 \pm 9.6$ \\
Switch                   & MultiBranch $K{=}32$                              & 30.288M & $49.54 \pm 0.20$ & $0.0 \pm 0.0$ \\
\midrule
DEMix                    & \multirow{2}{*}{MultiBranch $K{=}7$}              & 30.175M & $53.31 \pm 0.08$ & $\mathbf{100.0 \pm 0.0}$ \\
No-Routing               &                                                   & 30.175M & $46.80 \pm 0.11$ & $11.1 \pm 9.6$ \\
\midrule
No-Routing               & \multirow{2}{*}{MultiBranch $K{=}7$ $+$ SE}       & 30.168M & $45.28 \pm 0.10$ & $5.6 \pm 9.6$ \\
\textbf{Soft SpecDrop}   &                                                   & 30.168M & $45.38 \pm 0.02$ & $94.4 \pm 9.6$ \\
\bottomrule
\end{tabular*}
\end{table}

\paragraph{Routing-isolated tie and granularity alignment.}
SlimPajama's $7$ document-level domains form a fuzzy partition: over the full validation set ($9{,}766$ chunks), $56.1\%$ of $512$-token chunks span ${\geq}2$ BGE-KMeans clusters (App.~\ref{app:nlp_clusters}); the tie with the mechanism-OFF reference is the predicted outcome under our granularity-alignment thesis. The misalignment operates at training time and at distribution level: per-chunk purity is uncorrelated with the per-chunk cross-entropy difference against the matched control (Pearson $r{=}-0.001$, $p{=}0.95$, App.~\ref{app:nlp_clusters}), consistent with branch specialization forming over the whole training distribution rather than per evaluation chunk. Pruning-sensitivity rules out mechanism failure: ours $94.4\%$ Align (vs $14.3\%$ chance, $5.6\%$ matched-SE) --- the tied PPL reflects partition fuzziness, not mechanism failure to engage. Within the tie, our cross-seed $\sigma_{\text{mean}}{=}0.014$ is $4\times$ tighter than the matched-SE scalar's $0.057$, a stability benefit from step-granularity warmup (per-seed and per-domain PPL in App.~\ref{app:per_seed_domain}).

\subsection{SuperNI Instruction-Tuning Results (Llama-3.2-1B + LoRA)}
\label{sec:lora}

\paragraph{Setup.}
We evaluate on SuperNI~\citep{wang2022supernaturalinstructions} instruction tuning over $M{=}K{=}20$ imbalanced task clusters constructed by frequency-cutoff over the SuperNI \emph{Domains} field (top-19 most frequent normalized root domains plus a \emph{miscellaneous} bucket; algorithm in App.~\ref{app:superni_clusters}), evaluating on 119 held-out tasks. All methods run as adapters on a frozen Llama-3.2-1B base ($\sim$225M trainable LoRA parameters per method). We compare Soft SpecDrop against single LoRA~\citep{hu2022lora}, LoRAMoE~\citep{dou2024loramoe}, MoCLE~\citep{gou2024mocle}, HydraLoRA~\citep{tian2024hydralora}, and an architecture-matched No-Routing$+$SE baseline. All methods train with AdamW at learning rate $2{\times}10^{-4}$ on a cosine schedule, at an effective batch of $128$ ($8$ per device $\times$ $16$ accumulation steps), for $3$ epochs on the SuperNI training mix.

\paragraph{Results.}
\label{sec:lora_decomposition}
Table~\ref{tab:lora} reports SuperNI ROUGE-L F1. Soft SpecDrop achieves $\mathbf{0.5106 \pm 0.003}$, within seed noise of HydraLoRA ($0.5153$) and LoRAMoE ($0.5079$) in-distribution; the routing-only contribution over matched-SE No-Routing ($0.5094$) is $+0.0012$, statistically zero on this fuzzy partition. Pruning-sensitivity confirms no method specializes at the imposed $K{=}20$ partition (all six near $5\%$ chance, ours $2.2\%$, highest $6.7\%$) --- unlike NLP's $94.4\%$ alignment, the SuperNI partition is fuzzy enough that the mechanism fails to engage, a predicted null under our thesis (cf.\ the a-priori embedding diagnostic, App.~\ref{app:nlp_clusters}).

\begin{table}[!htbp]
\centering
\small
\caption{SuperNI held-out instruction tuning (119 test tasks; ROUGE-L computed on a $10$-instance-per-task sub-sample due to greedy-decoding cost, while training and CE-loss eval use the canonical $100$ instances per task) on Llama-3.2-1B with LoRA adapters. \#Params is trainable adapter only; the frozen Llama base ($\sim$1.24B) is excluded. Per-method capacity-match configurations in App.~\ref{app:baseline_adaptations}.}
\label{tab:lora}
\begin{tabular*}{\textwidth}{@{\extracolsep{\fill}}lcccc@{}}
\toprule
\textbf{Method} & \textbf{Backbone} & \textbf{\#Params} & \textbf{ROUGE-L F1 $\uparrow$} & \textbf{Align (\%) $\uparrow$} \\
\midrule
Single LoRA              & Single LoRA $r{=}320$                              & 225.44M & $0.4754 \pm 0.007$ & --- \\
MoCLE                    & MultiBranch $K{=}5$                                & 224.67M & $0.4924 \pm 0.010$ & $0.0 \pm 0.0$ \\
LoRAMoE                  & MultiBranch $K{=}6$                                & 225.31M & $0.5079 \pm 0.002$ & $2.2 \pm 3.8$ \\
HydraLoRA                & MultiBranch $K{=}8$                                & 226.56M & $\mathbf{0.5153 \pm 0.003}$ & $0.0 \pm 0.0$ \\
\midrule
No-Routing               & MultiBranch $K{=}20$                               & 225.44M & $0.4993 \pm 0.011$ & $6.7 \pm 6.7$ \\
\midrule
No-Routing               & \multirow{2}{*}{MultiBranch $K{=}20$ $+$ SE}       & 221.92M & $0.5094 \pm 0.007$ & $0.0 \pm 0.0$ \\
\textbf{Soft SpecDrop}   &                                                    & 221.92M & $0.5106 \pm 0.003$ & $2.2 \pm 3.8$ \\
\bottomrule
\end{tabular*}
\end{table}

\paragraph{Per-task analysis.}
A per-task split of $47/46/26$ ours-wins / HydraLoRA-wins / ties across $119$ SuperNI held-out tasks is statistically indistinguishable from random allocation ($\chi^2$ omnibus fails to reject $H_0$, App.~\ref{app:lora_fisher_per_task}); the win-pattern is not concentrated by task cluster.

\subsection{Hyperparameter Ablations}
\label{sec:nlp_miniablation}

\begin{figure}[!t]
\centering
\includegraphics[width=0.90\linewidth]{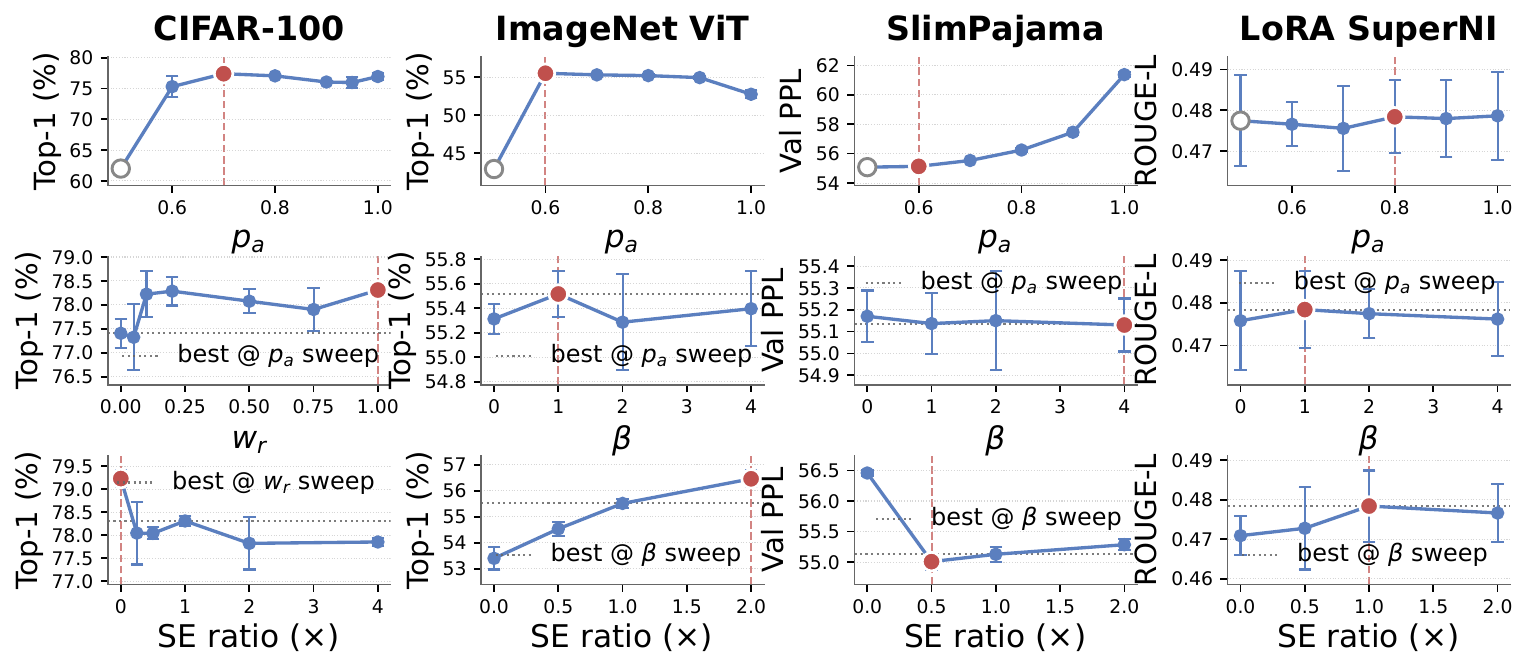}
\caption{\textbf{Hyperparameter ablations across four settings.} Row~1: activation probability $p_{\mathrm{a}}$. Row~2: imbalance-amplification exponent $\beta$ on the imbalanced settings; on balanced CIFAR-100 (equal training images per category), the per-category formula reduces to a scalar for all $\beta$, so we sweep the warmup ratio $w_r$ instead. Row~3: shared-expert capacity ratio $X$. Each panel: mean over 3 seeds, error bars $\pm 1\sigma$. The deployed operating point is marked with a vertical line and a filled red marker; hollow markers denote the algebraic mechanism-OFF point ($p_{\mathrm{a}}{=}p_{\mathrm{i}}$, excluded a priori); the dotted reference line carries the best value of the preceding row's sweep.}
\label{fig:ablation_grid}
\end{figure}

We sweep three core hyperparameters in sequence to identify deployed operating points (Fig.~\ref{fig:ablation_grid}). \textbf{Activation probability $p_{\mathrm{a}}$} (Row~1) is searched first, with $p_{\mathrm{i}}$ coupled as $1{-}p_{\mathrm{a}}$ (all sweeps and deployed operating points obey this coupling), excluding $p_{\mathrm{a}}{=}0.5$ a priori (where $p_{\mathrm{a}}{=}p_{\mathrm{i}}$ collapses to the mechanism-OFF reference, voiding downstream search). The remaining sweep places the optimal $p_{\mathrm{a}}$ at an intermediate value in every setting ($0.7$ on CIFAR-100, $0.6$ on ImageNet and SlimPajama, and $0.8$ on SuperNI/LoRA), with both endpoints worse, confirming that the cross-category leakage $p_{\mathrm{i}}{>}0$ is load-bearing rather than no-routing or hard one-hot. \textbf{Warmup ratio $w_r$ and imbalance amplification $\beta$} (Row~2): on balanced CIFAR-100 the per-category formula degenerates to a scalar across all $\beta$, so we sweep $w_r$ instead and adopt $w_r{=}1.0$ (cosine LR-aligned); imbalanced $\beta$ peaks at $\beta{=}4$ (SlimPajama) and $\beta{=}1$ (ImageNet, SuperNI/LoRA). \textbf{Shared-expert capacity $X$} (Row~3): imbalanced prefer SE on ($X \in [0.5, 2.0]$), balanced CIFAR-100 prefers off ($X{=}0$).

\subsection{Cross-Setting Specialization Analysis}
\label{sec:specialization}
\label{sec:nlp_specialization}
\label{sec:nlp_reliance}

We measure per-branch specialization via pruning sensitivity ($\Delta_{k,c}$ from zero-ablating branch $k$), visualized as heatmaps in Figure~\ref{fig:specialization}; per-method Align values are in Tabs.~\ref{tab:cifar_main}--\ref{tab:lora}. Disabling routing at inference (uniform $1/K$ masks) ranks the four settings via the resulting $\Delta$: $-69.83$ acc on CIFAR, $-8.61$ on ImageNet, $+0.37$ PPL on SlimPajama, $+0.0014$ ROUGE on SuperNI/LoRA --- the mechanism is load-bearing on aligned partitions and vestigial on fuzzy ones.

\begin{table}[!htbp]
\centering
\small
\caption{\textbf{SE/routing decomposition across the four settings.} SE $\Delta$ is the shared expert's architectural contribution (No-Routing $\to$ $+$SE, available to every method); routing $\Delta$ is the mechanism's marginal contribution under matched supervision ($+$SE $\to$ ours, identical parameters and compute). Deployed CIFAR uses no SE ($X{=}0$, App.~\ref{app:ablation_full}), so its routing $\Delta$ is measured from bare No-Routing.}
\label{tab:se_routing_decomp}
\begin{tabular*}{\textwidth}{@{\extracolsep{\fill}}llcccc@{}}
\toprule
\textbf{Setting} & \textbf{Metric} & \textbf{No-Routing} & \textbf{$+$SE} & \textbf{Ours} & \textbf{SE $\Delta$ / routing $\Delta$} \\
\midrule
CIFAR-100    & Top-1 $\uparrow$   & 63.08  & (no SE) & 79.23  & --- / $+16.15$ \\
ImageNet-1K  & Top-1 $\uparrow$   & 71.30  & 73.36   & 79.89  & $+2.06$ / $+6.53$ \\
SlimPajama   & PPL $\downarrow$   & 46.80  & 45.28   & 45.38  & $-1.52$ / $+0.10$ \\
SuperNI/LoRA & ROUGE-L $\uparrow$ & 0.4993 & 0.5094  & 0.5106 & $+0.0101$ / $+0.0012$ \\
\bottomrule
\end{tabular*}
\end{table}

The routing share is $+16.15$ and $+6.53$ on the aligned vision partitions, versus $+0.10$ PPL and $+0.0012$ ROUGE-L (both within seed noise) on the fuzzy ones (Tab.~\ref{tab:se_routing_decomp}).

On SuperNI, LoRA branches specialize on a partition different from the imposed $K{=}20$ ($0/15$ but $2.1\times$ signed diag/off at s42, App.~\ref{app:lora_diag}); specialization is necessary but not sufficient when the imposed partition mismatches the underlying data structure. An a-priori embedding-validity diagnostic (App.~\ref{app:nlp_clusters}) corroborates the modality split from data alone: BGE text-chunk embeddings yield silhouette $0.031$ (continuous manifold) while DINOv2 image embeddings yield $0.069$ (discrete clusters), predicting which modalities support categorical routing without consulting task metrics.

\section{Conclusion}
\label{sec:discussion}

We have argued that specialization in modular networks is governed by the alignment between training-signal granularity and the target categories, not the routing function alone. Across four settings, the mechanism's contribution is qualitatively positive on aligned partitions and indistinguishable from architecture-matched controls on fuzzy ones: aligned CIFAR-100 superclass ($+16.15$ over the matched No-Routing control) and ViT BREEDS ($+6.53$ over matched-SE) yield clear gains; fuzzy SlimPajama-6B ($+0.10$ PPL, tied) and anti-aligned SuperNI/LoRA (mean tie) show no significant routing contribution. The HardCategory ablation isolates the active ingredient: one-hot routing collapses $5.41$ below No-Routing on CIFAR despite reaching tautological $100\%$ branch--category alignment by construction, so the load-bearing component is the cross-category leakage $p_{\mathrm{i}}{>}0$, not metadata access or alignment itself. Granularity alignment, not algorithm choice, localizes when routing helps.

\paragraph{What the label buys: the masking control and its scope.}\looseness=-1
An information-matched control sharpens what the vision accuracy gains mean. Given the same label at inference, simply masking a dense model's logits to its fine classes is stronger for accuracy alone: masked dense reaches $85.23$ on CIFAR-100 and $83.65$ on ImageNet, above ours at $79.23$/$79.89$ (3-seed control, App.~\ref{app:logit_mask}). Where the output space is hierarchically partitioned by the category and the deployment goal is only accuracy under a trusted label, logit masking is the stronger and cheaper mechanism at the scales we test; SpecDrop's contribution is inducing modular structure under matched supervision, and the accuracy comparisons in the paper (against No-Routing controls without inference-time masking) should be read in that scope. Applying the same masking to SpecDrop's own outputs adds exactly $0.00$ on CIFAR (its predictions leave the given superclass once in $30{,}000$ across three seeds, versus $16.3\%$ of the time for dense) and $+1.06$ on ImageNet: the output-space restriction is largely internalized during training, and what remains beyond it is the trained-in structure of Tabs.~\ref{tab:cifar_main}--\ref{tab:vit} (branch--category alignment, per-category pruning, selective deployment) that a monolithic model, masked or not, has no substructure to support. The control itself exists only where the output space is hierarchically partitioned by the category; on SlimPajama and SuperNI the output space (a shared vocabulary, free-form text) admits no category masking.

\paragraph{The matched-supervision margin as an instrument.}\looseness=-1
The $+6.53$ margin over the matched-SE control is a category-oracle ceiling on what category structure buys through routing at this scale, and learned routers can be scored against it. The tuned Soft MoE recovers $+3.33$ of the ceiling ($76.69$ vs $73.36$), almost exactly half, while SpecDrop retains a $3.20$-point margin at matched parameters; the ALF router ($71.09$) and Mod-Squad ($70.11$) remain below the $73.36$ control (App.~\ref{app:vit_miniablation}).

\paragraph{Limitations.}\looseness=-2
Three axes frame our results. On scale, our experiments span 30M language models, with a 125M 3-seed replication verifying the same direction (App.~\ref{app:scaling_check}); the $\geq 10$B Mixture-of-Experts and $\geq 1$B Chinchilla-optimal regimes remain open. On inference metadata, the category label is required at deployment, and on the two vision settings the label is target-derived (CIFAR-100 superclass is a coarsening of the fine label; ImageNet/BREEDS supercategories are constructed from fine labels), so the matched-architecture comparisons (No-Routing and No-Routing+SE on the same partition) rather than the dense reference are the supervision-controlled comparisons. When the label must be predicted, the label-quality curve (App.~\ref{app:label_quality}) makes the applicability condition quantitative. SpecDrop stays above the architecture-matched control down to ${\approx}80\%$ category-label accuracy, a bar that training-free predictors (the dense model's own prediction coarsened through the hierarchy; ${\approx}2\times$ deployment cost, App.~\ref{app:label_quality}) clear on both vision settings: the $83.8\%$ CIFAR predictor retains $+6.2$ over the matched control ($69.3$ vs $63.08$), and the $88.5\%$ ImageNet predictor retains $+0.9$ ($74.25$ vs $73.36$). Beating the dense reference instead requires ${\approx}92$--$94\%$ label accuracy, which no capacity-comparable predictor reaches on CIFAR's 5-classes-per-category partition (a coarse head fine-tuned from the dense checkpoint attains ${\sim}84.5\%$); under predicted labels the vision settings do not clear that higher bar --- that too is our own measurement, and it is why the applicability condition (a trusted category label available at inference) is load-bearing. On theory, our results characterize the construction rather than provide tight bounds.

\paragraph{Scope on dense prediction.}\looseness=-1
For segmentation, detection, or VQA, a single input contains multiple categories, so whole-input tags break the one-clean-tag alignment condition; the thesis makes a falsifiable prediction there: no gain over matched controls without region-level tags, the same reduction-to-control observed on SlimPajama and SuperNI. The alignment condition is restorable: per-pixel class labels coarsen to per-region superclass tags, and our merge already applies the mask per sample (a $(B,1,K,1)$ broadcast over tokens), so a region-level tag promotes it to per-token $(B,T,K)$, the fixed-signal analogue of V-MoE's per-token routing. Testing whether region-level tags recover the vision-classification gains is the direct falsification test of the thesis on dense prediction.

\paragraph{Impact statement.}
The granularity-alignment characterization guides when category-conditioned routing yields gains; deployments should ensure the category signal does not encode sensitive demographic or proxy attributes, since routing could amplify upstream bias.

\enlargethispage{2\baselineskip}
\paragraph{Future work.}\looseness=-2
Token-level tag attribution could recover the routing signal lost to chunk-level coarsening ($56.1\%$ of $512$-token chunks span ${\geq}2$ BGE clusters, $n{=}9{,}766$, App.~\ref{app:nlp_clusters}); scaling to Chinchilla-optimal regimes would test the thesis at scale; predictor co-training would address the inference-metadata caveat. \emph{(i)~Closed-form auxiliary-loss-free balancing} (bridging Sec.~\ref{sec:related}'s trajectory): with a fixed assignment, branch load is computable in closed form from category frequencies, and the per-category $\beta$-amplification already plays, statically, the role of \citet{wang2024alf}'s online bias correction; combining a category prior with their bias update is the natural synthesis when category structure is informative but imperfect. \emph{(ii)~Fixed-to-learned handover}: StableMoE~\citep{dai2022stablemoe} distills a learned router into a frozen one; the mirror-image curriculum (train under the fixed signal, then hand off to a learned router needing no labels at inference) would import SpecDrop's specialization into standard MoE deployment.

\clearpage
\bibliographystyle{plainnat}
\bibliography{references}

\clearpage
\appendix

\section{Stochastic SpecDrop Formulation and Additional Ablations}
\label{app:stoch_soft}
\label{app:denom_ablation}

This appendix contains (i) the Stochastic (Bernoulli) SpecDrop variant against which the canonical Soft variant is benchmarked, (ii) a four-corner mask $\times$ denominator ablation at the ResNet-110 / CIFAR-100 scale, and (iii) a random-permutation robustness check on the assignment matrix $\mathbf{A}$.
All results support the main-text claim that the category-asymmetric \emph{deterministic} soft signal is the active ingredient; the Bernoulli variant and the specific $\mathbf{A}$ ordering are not.

\subsection{Full Phase A/B/C Hyperparameter Ablation}
\label{app:ablation_full}

Section~\ref{sec:ablation_main} summarizes the 3-phase hyperparameter ablation.
The full per-value sweep is below; take-aways follow the same pattern described in the main text.

\begin{table}[h]
\centering
\small
\caption{\textbf{Full Soft SpecDrop hyperparameter ablation on CIFAR-100.} MultiBranchResNet110, $K{=}20$, 200 epochs, 3 seeds.
Phase~A: $p_{\text{active}}$ sweep at $w_r{=}0$ (no warmup) with a $1.0\times$ shared expert. Phase~B: SE capacity ratio at $(p_a, w_r){=}(0.7, 1.0)$ (cosine warmup, deployed). Bold = phase-wise best. The Phase~B $0\times$ SE value $79.23\pm 0.17$ is the deployed number reported in Tab.~\ref{tab:cifar_main}; the $1.82$ gap from Phase~A's $p_a{=}0.7$ optimum ($77.41$) combines the cosine-warmup contribution ($+0.90$ at the same $1.0\times$ SE, Phase~B) with the effect of removing the SE ($+0.92$).}
\label{tab:ablation_combined}
\begin{tabular*}{\textwidth}{@{\extracolsep{\fill}}l l *{7}{c}@{}}
\toprule
\multicolumn{2}{@{}l}{\textbf{Phase A}: $p_{\text{active}}$} & 0.5 & 0.6 & \textbf{0.7} & 0.8 & 0.9 & 0.95 & 1.0 \\
\cmidrule(lr){3-9}
\multicolumn{2}{@{}l}{Top-1 (\%)} & 62.00 & 75.31 & $\mathbf{77.41}$ & 77.06 & 76.05 & 75.97 & 76.95 \\
\multicolumn{2}{@{}l}{$\pm$ std}  & 0.36  & 1.72  & 0.30             & 0.21  & 0.17  & 0.93  & 0.64 \\
\midrule
\multicolumn{2}{@{}l}{\textbf{Phase B}: SE ratio} & \textbf{0$\times$} & 0.25$\times$ & 0.5$\times$ & 1.0$\times$ & 2.0$\times$ & 4.0$\times$ & --- \\
\cmidrule(lr){3-9}
\multicolumn{2}{@{}l}{Top-1 (\%)} & $\mathbf{79.23}$ & 78.05 & 78.04 & 78.31 & 77.82 & 77.85 & --- \\
\multicolumn{2}{@{}l}{$\pm$ std}  & 0.17             & 0.68  & 0.13  & 0.11  & 0.57  & 0.09  & --- \\
\bottomrule
\end{tabular*}
\end{table}

\paragraph{Extended take-aways.}
Phase~A directly tests the central theoretical prediction: a non-trivial cross-category leakage term ($p_{\text{inactive}}{>}0$) is necessary --- both fully uniform ($p_a{=}0.5$, collapse to $62.00\%$) and fully hard routing ($p_a{=}1.0$, $76.95\%$) are strictly worse than the dual-probability optimum $p_a{=}0.7$.
Phase~B reveals a modality-dependent finding: an always-on shared expert helps on imbalanced domains (NLP, Section~\ref{sec:nlp}) but dilutes the routing signal on balanced CIFAR-100.

\subsection{Stochastic SpecDrop}

Section~\ref{sec:soft_specdrop} of the main text presents Soft SpecDrop (deterministic soft weighting) as our canonical variant; here we document the original Bernoulli sampling scheme for completeness.
For each training sample with category $c$, we draw an independent Bernoulli mask per module:
\begin{equation}
    m_k \sim \mathrm{Bernoulli}\bigl(p_k(c)\bigr), \qquad \text{output}_{\text{train}} = \frac{\sum_k m_k\, h_k}{S} \;+\; h_{\mathrm{SE}},
\end{equation}
where $S = p_{\text{active}} + (K-1)p_{\text{inactive}}$ is the fixed constant of Prop.~\ref{thm:fixed_denom} and $h_{\mathrm{SE}}$ is the shared-expert output defined in Eq.~\ref{eq:soft_specdrop} (added \emph{after} $\div S$, by the same design as Soft SpecDrop; set to zero when no shared expert is used).
At inference, $m_k = p_k(c)$ deterministically, giving the same output expression as Soft SpecDrop.
The stochastic masking acts as a regularizer analogous to standard dropout, and the train--test expectations match by Prop.~\ref{thm:fixed_denom}(a) (linearity under the fixed denominator).
Soft SpecDrop removes Bernoulli mask-sampling variance, so the $p_k(c)/S$ scaling in Thm.~\ref{thm:gradient} is exact at training time (the $p_a/p_i$ ratio still requires the stated base-gradient symmetry); this motivates the choice of Soft as the default variant in the main experiments.

\subsection{Mask $\times$ Denominator Ablation}

We decompose Soft SpecDrop into its orthogonal design axes: whether masks are \emph{stochastic} (Bernoulli $m_k$) or \emph{deterministic} (soft weight $p_k(c)$), and whether the merge denominator is \emph{fixed} ($\div S$) or \emph{stochastic} ($\div \sum_k m_k$).
Proposition~\ref{thm:fixed_denom} singles out the fixed denominator as the unique category-independent constant that gives exact train--test consistency; the naive stochastic denominator incurs a closed-form Jensen bias.
Table~\ref{tab:denom_ablation} isolates all four combinations at the paper-faithful scale (MultiBranchResNet110, $K{=}20$, 200 epochs, 3 seeds), plus two random-dropout references that remove the category-conditioned matrix $\mathbf{A}$ entirely.

\begin{table}[!htbp]
\centering
\small
\caption{\textbf{Mask $\times$ denominator ablation} on CIFAR-100 (MultiBranchResNet110, $K{=}20$, 200 epochs; 3 seeds for all rows, with standard deviation reported only for the deployed Soft + fixed $S$ row). The Soft + fixed $S$ row uses the deployed canonical config $(p_a, w_r, \text{SE}){=}(0.7, 1.0, 0\times)$ matching Tab.~\ref{tab:cifar_main}; the Bernoulli (Stochastic) variants use $(p_{\text{active}}, p_{\text{inactive}}){=}(0.9, 0.1)$ tuned for the Bernoulli regime per App.~\ref{app:stoch_soft}; random-dropout (no-$\mathbf{A}$) comparisons remove category conditioning entirely. The decomposition isolates deterministic weighting, fixed-denominator, and category conditioning along three design axes; mask probabilities are tuned per regime as listed above, so cross-regime comparisons additionally absorb a probability shift.}
\label{tab:denom_ablation}
\begin{tabular*}{\textwidth}{@{\extracolsep{\fill}}llcc@{}}
\toprule
\textbf{Mask} & \textbf{Denominator} & \textbf{Train--test consistent?} & \textbf{Top-1 (\%)} \\
\midrule
Soft (deterministic $p_k(c)$)           & fixed $S$              & yes (trivially, Prop.~\ref{thm:fixed_denom}) & $\mathbf{79.23 \pm 0.17}$ \\
Stochastic (Bernoulli $m_k$)            & fixed $S$              & yes (Prop.~\ref{thm:fixed_denom})            & $64.55$ \\
Stochastic (Bernoulli $m_k$)            & stochastic $\sum_k m_k$ & no (Jensen bias)                            & $67.54$ \\
\midrule
Random dropout (no $\mathbf{A}$)        & fixed $S$              & yes                                         & $58.97$ \\
Random dropout (no $\mathbf{A}$)        & stochastic $\sum_k m_k$ & no (Jensen bias)                            & $59.53$ \\
\bottomrule
\end{tabular*}
\end{table}

\paragraph{Take-aways.}
Two quantitative effects decompose the method.
\emph{(i)~Deterministic soft weighting is the dominant contributor.}
Within category-conditioned variants at fixed $S$, replacing the deterministic $p_k(c)$ with Bernoulli sampling costs $14.68\%$ ($79.23 \to 64.55$) --- the single largest effect in the table.
\emph{(ii)~Category conditioning contributes independently on top of soft weighting.}
Comparing the two Bernoulli-masked variants at fixed $S$ \emph{with} vs.\ \emph{without} category conditioning isolates the effect of $\mathbf{A}$: $64.55$ vs.\ $58.97 = +5.58\%$.
The fixed-versus-stochastic-denominator distinction, theoretically predicted to favor fixed via Jensen-bias elimination, is less clean empirically in the Bernoulli regime: stoch\_fixed $64.55$ vs.\ stoch\_naive $67.54$ (direction inverted from the theoretical prediction, consistent with the Bernoulli experiments' own $\sim 2\%$ seed-to-seed variance dominating the bias correction).
In the deterministic regime --- where we operate in Soft SpecDrop --- the fixed denominator $S$ applies by linearity (Prop.~\ref{thm:fixed_denom}(a)) and the Jensen-bias issue does not arise.
The random-dropout floor ($58.97$/$59.53$) confirms that without $\mathbf{A}$, the mask mechanism alone is worse than the $63.08$ No-Routing baseline by $\sim 4\%$: it is the \emph{asymmetric signal shaped by categories}, not the masking itself, that drives the gain.

\subsection{Assignment-Matrix Geometry: Round-Robin vs.\ Random Permutation}
\label{app:random_a}

To rule out the hypothesis that the specific round-robin assignment $\mathbf{A}_{\text{rr}}$ is responsible for the gain, we replace $\mathbf{A}_{\text{rr}}$ with a random permutation $\mathbf{A}_{\text{rand}}$ (each category still assigned to exactly one module, but the category-to-module mapping is a uniform random bijection; assignment seeds $42/123/456$).
Random permutation gives $78.69 \pm 0.27\%$ versus round-robin's $79.23 \pm 0.17\%$ --- a gap of $-0.54\%$ (${\approx}1.7\sigma$ under the two configurations' combined seed variance).
The gradient asymmetry $p_{\text{active}}/p_{\text{inactive}}$ is the active ingredient; the method requires only that $\mathbf{A}$ be a bijection, not a carefully chosen semantic grouping.

\section{Baseline Implementation Details}
\label{app:baseline_adaptations}

We describe the implementation of each baseline used in the main comparisons.
The CIFAR-100 baselines (Section~\ref{sec:cifar_results}) all run on the dense ResNet-110 backbone exactly as in their original papers, with no multi-branch wrapping.
The NLP baselines (Section~\ref{sec:nlp}) are paper-canonical MoE architectures, each implemented as a dedicated model class; the algorithm-plugin pathway used by SpecDrop is bypassed because routing is intrinsic to these models.

\subsection{CIFAR-100 baselines}

\paragraph{Stochastic Depth~\citep{huang2016deep}.}
Block-level Bernoulli drop on dense ResNet-110, with the linear survival schedule $p_\ell = 1 - \frac{\ell}{L}(1 - p_L)$ and $p_L\!=\!0.5$ exactly as in Equation~4 of the original paper.
Survival probabilities are stored on the modules and consumed inside each \texttt{BasicBlock.forward}; the residual identity is preserved when a block is dropped (no \texttt{ReLU} on the shortcut).
At inference the per-block expected scaling is applied.

\paragraph{Example-Tied Dropout~\citep{maini2023example}.}
Per-example fixed binary masks over a memorization-channel subset, applied per-block within the dense ResNet-110 residual path.
We follow the paper's deployment protocol from its Section~6.2: at test time the memorization neurons are zeroed out, leaving only the generalization channels active.
This protocol improves test accuracy on the 9-layer ResNet of the original paper, but on a 54-layer ResNet-110 it removes $25\%$ of the convolutional capacity at every block; we report the paper-faithful protocol rather than tuning it off-protocol.

\paragraph{Contextual Dropout~\citep{fan2021contextual}.}
Gaussian variant with the paper's scaled sigmoid $\sigma_t(\alpha) = \sigma(0.01\alpha)$ as a numerical stabilizer, placed inside the residual path between the two convolutions, matching the WRN placement of Figure~6 of the paper.
The context network is the smallest variant from the paper.

\paragraph{No-Routing.}
The same MultiBranchResNet110 architecture used by Soft SpecDrop, with all branch weights fixed to $1/K$ for every input.
This isolates the cost of the multi-branch architecture from the routing signal: any gap between Soft SpecDrop and No-Routing is attributable purely to the category-conditioned routing and the fixed-denominator merge.

\subsection{NLP baselines}

All NLP baselines share the same backbone (6-layer Transformer LM, hidden $384$, $6$ heads, max sequence length $512$, vocab $50{,}257$) and the same training recipe (Section~\ref{sec:nlp}); only the FFN and routing differ.
Each baseline is implemented as a stand-alone model class so that routing is intrinsic to the architecture rather than a plug-in mask layer.

\paragraph{Switch Transformer~\citep{fedus2022switch}.}
Paper-canonical $N\!=\!32$ experts with FFN hidden $48$ each (exact total parameter match to dense FFN $1536$).
Each token is routed to the top-$1$ expert via a learned linear router; the selected expert's output is scaled by its softmax gate value $p_t[i^*]$ (Fedus 2022 \S 2.1), so gradients flow through the gate naturally.
The auxiliary load-balance loss (paper Eq.~4) is added with weight $0.01$.

\paragraph{Hash Layers~\citep{roller2021hash}.}
Scaled-down $N\!=\!8$ experts with FFN hidden $192$ each (parameter-matched; the original paper's smallest configuration is $N{=}16$).
Each token is mapped to one expert by a fixed random hash table over the vocabulary; the hash table is drawn once at init from a fixed seed and frozen.
No routing parameters and no auxiliary loss; per-layer hash tables use offset seeds.

\paragraph{SMoE-Dropout~\citep{chen2023smoe}.}
Paper Figure~5 setting $N\!=\!16$ experts with FFN hidden $96$ each.
Routing uses a fixed random Linear projection (no learned parameters); top-$k$ selection with softmax-normalized weights, where $k$ follows a linear schedule from $1$ to $K$ over training (\citeauthor{chen2023smoe}, $k_t = k_{\text{init}} + (K - k_{\text{init}}) \cdot t / T$).
We disable the per-expert Bernoulli dropout (\texttt{expert\_drop\_prob}=$0.0$) to match the paper.

\paragraph{DEMix~\citep{gururangan2022demix}.}
One FFN per domain ($N\!=\!7$), FFN hidden $220$ each (parameter-matched).
Hard per-domain routing using the document's domain tag during training; at inference we use the mixture-of-experts inference of \citet{gururangan2022demix} (\S 5.2) with uniform mixture weights.

\paragraph{Dense Transformer.}
A single FFN of hidden $1536$, no routing.
This is the parameter and compute reference for every multi-branch method.

\paragraph{No-Routing (NLP).}
$K\!=\!7$ FFN branches of width $220$ each, combined by uniform $1/K$ weighting at every layer.
Architecturally identical to the multi-branch SpecDrop variant but with the routing signal removed; bounds the best perplexity any routing algorithm can achieve at this branch count and width.

\paragraph{Batch-size disclosure (NLP).}
All NLP perplexity numbers (Sec.~\ref{sec:nlp}, Table~\ref{tab:nlp}) are from training runs with per-device batch size $32$ sequences of length $512$ ($16{,}384$ tokens per optimizer step). We verified the method rankings of Table~\ref{tab:nlp} are stable to a doubling of the per-device batch.

\paragraph{NLP training-regime disclosure (multi-epoch over Chinchilla-optimal).}
\label{app:nlp_regime_disclosure}
All NLP experiments (Sec.~\ref{sec:nlp}, mini-ablation Sec.~\ref{sec:nlp_miniablation}, exploratory App.~\ref{app:nlp_clusters}) train for $10$ epochs on $500$M unique SlimPajama tokens, totaling $5$B token-passes ($\approx 167$ tokens per parameter at $30$M scale, well above the Chinchilla-optimal $\sim 20$ tokens per parameter); the $125$M scale-up (App.~\ref{app:scaling_check}) preserves this regime to isolate model size as the only varying factor.
We retain the $10$-epoch regime to preserve internal consistency across the NLP-side ablation cells (Sec.~\ref{sec:nlp_miniablation}, App.~\ref{app:nlp_miniablation}); a $1$-epoch sanity verification at the same total-token budget is reported in App.~\ref{app:nlp_regime_sanity}.
\textbf{All NLP baselines (Dense, No-Routing, No-Routing+SE, Switch, Hash Layers, DEMix, SMoE-Dropout, ours) use the identical $10$-epoch regime}, so cross-method rankings of Table~\ref{tab:nlp} are internally fair.
We acknowledge this regime is over-trained relative to standard $1$-epoch LM-pretraining practice (Switch, GPT-3, LLaMA, Chinchilla); a $30$M $\times$ $1$-epoch sanity verification of the ours-vs-matched-SE-scalar tie is reported in App.~\ref{app:nlp_regime_sanity}, and the $125$M scale-up of App.~\ref{app:scaling_check} shows the same conditional-negative direction at $4\times$ scale.
The granularity-alignment thesis itself (Sec.~\ref{sec:discussion}) is anchored to data-modality properties --- intra-chunk heterogeneity ($56.1\%$ of $512$-token chunks span $\geq 2$ BGE clusters at $k{=}7$ over the full $9{,}766$-chunk validation set, App.~\ref{app:nlp_clusters}) and silhouette modality asymmetry (BGE $s_{\max}{=}0.031$ vs DINOv2 CIFAR $0.069$) --- which are properties of the data and embeddings, independent of training-epoch count.

\paragraph{Hyperparameter selection.}
Our method's operating point is identified via a three-stage sequential ablation (Sec.~\ref{sec:ablation_main} for CIFAR, Sec.~\ref{sec:nlp_miniablation} for NLP) at $3$ seeds per cell. Baselines use the paper-canonical hyperparameters specified in their original publications (Switch LB weight $0.01$, Hash Layers frozen random seeds, SMoE-Dropout \texttt{expert\_drop\_prob}$=0$, DEMix hard domain routing, Mod-Squad \texttt{mi\_weight}$=0.001$, COMET \texttt{p\_keep}$=0.25$).

\subsection{ImageNet BREEDS-46 Construction Algorithm}
\label{app:breeds_construction}

The $M{=}46$ supercategory partition is derived from the BREEDS~\citep{santurkar2021breeds} curated WordNet hierarchy via a recursive expansion controlled by two parameters, $T$ (max-leaves before forced expansion) and $C$ (max-children allowed for an expansion to be accepted). Starting from BREEDS' level-3 nodes ($29$ groups covering most of the $1000$ ImageNet-1K fine classes), for each node whose number of ImageNet leaves exceeds $T$ we attempt to expand into its WordNet children; the expansion is accepted only when the node has at most $C$ non-empty child groups, which prevents degenerate splits where one child holds nearly all leaves and the rest become singletons (e.g., \emph{carnivore} has $\sim 25$ children but only \emph{dog} carries substantial mass).

With $T{=}60$, $C{=}10$ as our depth/branching cutoffs, this recursion produces $45$ groups covering $890$ classes; the remaining $110$ ImageNet classes that do not appear in the BREEDS curated tree are pooled into a single \emph{miscellaneous} group, yielding $M{=}46$. The resulting partition is imbalanced: the largest group is \emph{carnivore} ($158$ classes), followed by \emph{miscellaneous} ($110$), \emph{man-made structure} ($62$), \emph{bird} ($59$), and \emph{equipment} ($49$); a long tail of $21$ groups holds only $2$--$8$ classes each (e.g., \emph{aquatic mammal}, \emph{marsupial}, \emph{vascular plant}). These $T$, $C$ values are our hyperparameters for the recursive expansion (the BREEDS curated tree itself comes from Santurkar et al.); at fixed $C{=}10$, lowering the cutoff to $T{=}20$ refines the partition to $K{=}73$, while raising it to $T{=}100$ leaves $K{=}46$ unchanged (the only remaining tree group above that cutoff, \emph{carnivore}, does not admit an expansion into ${\leq}C$ non-empty subgroups).

\subsection{SuperNI K=20 Clustering Construction}
\label{app:superni_clusters}

The $M{=}K{=}20$ task clusters used in Section~\ref{sec:lora} are constructed by frequency-cutoff over the SuperNI~\citep{wang2022supernaturalinstructions} \emph{Domains} field, not by $K$-means or sentence-embedding clustering.

\textbf{Algorithm.} For each task in the $756$-task English training split: (i) take its first \emph{Domains} entry and normalize it by extracting the root segment of the hierarchical path (e.g., \emph{``Commonsense $\to$ Concepts and Relations $\to$ Social Commonsense''} becomes \emph{``Commonsense''}), yielding $\sim 72$ unique normalized root domains across the train split; (ii) count root-domain frequency over training tasks; (iii) retain the top-$19$ most frequent root domains as cluster IDs $0$--$18$, with all remaining tasks assigned to cluster ID $19$ (\emph{miscellaneous}). At test time, the $119$ held-out tasks map to clusters via the same normalize-and-lookup function; tasks whose normalized root domain is unseen in the train-frequency table fall back to the miscellaneous cluster.

This is the BREEDS-analog convention (frequency-cutoff with a miscellaneous bucket; cf.\ App.~\ref{app:breeds_construction}), chosen for determinism and consistency with our ImageNet partition. We did not perform $K$-means over Wang et al.'s task definitions; that is a reasonable alternative left to future work. The choice $K{=}20$ matches the CIFAR-100 superclass count for cross-setting comparison, and the trailing miscellaneous cluster holds $30$ of the $119$ test tasks, large enough to avoid being dominated by a single task family.

\section{Proofs}
\label{app:proofs}

\subsection{Proof of Theorem~\ref{thm:gradient} (Gradient Concentration)}

\begin{proof}
Let $z = \sum_k m_k h_k / S$ denote the merged routed output, so $\hat{y} = f_{\text{head}}(z)$.
By the chain rule, $\frac{\partial \mathcal{L}}{\partial \theta_k} = \frac{\partial \mathcal{L}}{\partial z} \cdot \frac{m_k}{S} \cdot \frac{\partial h_k}{\partial \theta_k}$.
Taking norms and then expectations over $m$ (under the mask-independence assumption (i)):
\[
\mathbb{E}_m\!\left[\!\left\|\frac{\partial \mathcal{L}}{\partial \theta_k}\right\| \;\middle|\; c\right] = \frac{p_k(c)}{S} \cdot \mathbb{E}_{x|c}\!\left[\!\left\|\frac{\partial \mathcal{L}}{\partial z}\cdot\frac{\partial h_k}{\partial \theta_k}\right\|\right].
\]
For assigned $c$ ($A_{ck}{=}1$), $p_k(c)=p_{\text{active}}$; for unassigned $c'$ ($A_{c'k}{=}0$), $p_k(c')=p_{\text{inactive}}$.
Under assumption (ii) the base gradient expectations $\mathbb{E}_{x|c}[\cdot]$ and $\mathbb{E}_{x|c'}[\cdot]$ coincide, and the ratio reduces to $p_{\text{active}}/p_{\text{inactive}}$.
\end{proof}

The independence assumption in Theorem~\ref{thm:gradient} is a first-order approximation: in practice, the mask realization $m_k$ affects the merged output and thus the loss landscape, creating higher-order dependencies. The equal-base-gradient assumption (ii) is also a first-order initial-condition idealization: once specialization develops it is violated in the direction that would only widen the predicted ratio.
However, the linear scaling $p_k(c)/S$ dominates the gradient expectation, and our empirical proxy is consistent with this first-order prediction: the diagonal-to-off-diagonal pruning-sensitivity ratio (a post-training loss-recovery proxy) measured at $(p_{\text{active}}, p_{\text{inactive}}) = (0.7, 0.3)$ is $1.87\times$, close to the theoretical $p_{\text{active}}/p_{\text{inactive}} = 2.33\times$ (Section~\ref{sec:specialization}); the small shortfall is attributable to the higher-order mask-realization dependencies this first-order approximation drops together with the proxy gap (gradient-flow $\to$ end-of-training pruning sensitivity is heuristic, not a tight bridge).

\subsection{Proof of Proposition~\ref{thm:fixed_denom} (Fixed vs.\ Stochastic Denominator)}

\begin{proof}
\textbf{Part (a).}
By linearity of expectation, for any constant $\alpha>0$ we have $\mathbb{E}_m[\sum_k m_k h_k / \alpha] = \sum_k p_k(c) h_k / \alpha$.
Setting $\alpha = S$ matches the test-time forward pass $\sum_k p_k(c) h_k / S$ exactly.
Under round-robin assignment, exactly one module per category has $p_k = p_{\text{active}}$ and the other $K-1$ have $p_k = p_{\text{inactive}}$, so $S = \sum_k p_k(c) = p_{\text{active}} + (K-1)p_{\text{inactive}}$ is the same scalar for every category $c$.
Finally, $\sum_k (p_k(c)/S) = 1$ for all $c$, so the merge is a proper convex combination.

\textbf{Part (b).}
Writing $X = \sum_k m_k h_k$ and $Y = 1/\sum_k m_k$, we have $\mathbb{E}[XY] = \mathbb{E}[X]\mathbb{E}[Y] + \text{Cov}(X, Y)$.
$X$ and $Y$ are both functions of the same mask draws, so $\text{Cov}(X, Y) \neq 0$ for generic $\{h_k\}$.
A second-order Taylor expansion of $1/N$ around $\mu_N = \mathbb{E}[\sum_k m_k] = S$ gives $\mathbb{E}[1/N] \approx 1/S + \mathrm{Var}(N)/S^3$, so the relative magnitude of the Jensen bias scales as $\mathrm{Var}(\sum_k m_k)/S^2$.
\end{proof}

\begin{remark}[Quantifying the Jensen Bias]
With $p_{\textup{active}} = 0.9$, $p_{\textup{inactive}} = 0.1$, $K = 4$: $S = 1.2$, $\textup{Var}(N) = 0.09 + 3 \times 0.09 = 0.36$.
The second-order Taylor scale $\mathrm{Var}(N)/S^2{=}0.25$ overstates the bias here ($\sigma_N/\mu_N{=}0.5$ violates the small-deviation regime); we therefore report the explicit per-branch bias instead.
Accounting for the $P(N{=}0)=0.1\times 0.9^3{\approx}0.0729$ all-zero mask (where the naive merge outputs $0$ and the unconditional weights $\mathbb{E}[m_k/N]$ sum to $P(N{>}0)\approx 0.9271$ rather than $1$): the unassigned branches' effective weight shifts from $0.083$ to $\approx 0.0511$, and the assigned branch's effective weight shifts from the intended $0.750$ to $0.9271 - 3{\times}0.0511 \approx 0.774$, a $+3.2\%$ relative bias for the assigned branch and $-38.7\%$ for unassigned branches.
This Jensen bias governs only the Stochastic SpecDrop variant of App.~\ref{app:stoch_soft}; for the deployed Soft variant the deterministic weights $p_k(c)$ remove the stochastic-denominator issue entirely and Prop.~\ref{thm:fixed_denom}'s empirical role becomes the magnitude calibration of $S$ in Eq.~\ref{eq:soft_specdrop} (Sec.~\ref{sec:soft_specdrop}).
In the Bernoulli-regime ablation of Tab.~\ref{tab:denom_ablation} (stoch-fixed $64.55$ vs.\ stoch-naive $67.54$), the ${\sim}2\%$ Bernoulli seed variance dominates the Jensen-bias correction at this scale.
\end{remark}

\subsection{Per-Category $S$-Invariance (Extension to Eq.~\ref{eq:per_category})}
\label{app:per_cat_proof}

The per-category schedule of Sec.~\ref{sec:per_category} generalizes $(p_{\mathrm{a}}, p_{\mathrm{i}})$ to $(p_{\mathrm{a}}^c, p_{\mathrm{i}}^c)$ while preserving $S^c{=}S$ exactly.

\begin{lemma}[Per-Category $S$-Invariance]
\label{lem:s_invariance}
Let $\mathrm{gap}_c{=}(p_{\mathrm{a}}{-}p_{\mathrm{i}})\,[(1{-}\pi_c)/(1{-}1/M)]^{\beta}$ with $\beta{\geq}0$, $\{\pi_c\}$ any probability distribution on $\mathcal{C}$, and $(p_{\mathrm{a}}^c, p_{\mathrm{i}}^c){=}(S/K{+}\mathrm{gap}_c(K{-}1)/K,\, S/K{-}\mathrm{gap}_c/K)$ per Eq.~\ref{eq:per_category}.
Then $S^c \,\triangleq\, p_{\mathrm{a}}^c + (K{-}1)\, p_{\mathrm{i}}^c \,=\, S$ for every $c$, $\beta$, and $\{\pi_c\}$.
\end{lemma}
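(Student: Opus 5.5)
The plan is a direct algebraic substitution. The claim is purely an identity in the per-category weights, so no probabilistic argument or earlier result (Proposition~\ref{thm:fixed_denom}, Theorem~\ref{thm:gradient}) is needed. The only fact used about $\mathrm{gap}_c$ is that it is some real number depending on $c$, $\beta$ and $\{\pi_c\}$. The specific functional form $(p_{\mathrm{a}}{-}p_{\mathrm{i}})[(1{-}\pi_c)/(1{-}1/M)]^{\beta}$ plays no role in the invariance.

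\textbf{Step 1.} Substitute the two definitions from Eq.~\ref{eq:per_category} into $S^c = p_{\mathrm{a}}^c + (K{-}1)p_{\mathrm{i}}^c$. This gives
\[
S^c = \frac{S}{K} + \mathrm{gap}_c\frac{K-1}{K} + (K-1)\Bigl(\frac{S}{K} - \frac{\mathrm{gap}_c}{K}\Bigr).
\]

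\textbf{Step 2.} Group the terms into an $S$-part and a $\mathrm{gap}_c$-part. The $S$-part is $\frac{S}{K}(1 + (K{-}1)) = S$. The $\mathrm{gap}_c$-part is $\mathrm{gap}_c\bigl(\frac{K-1}{K} - \frac{K-1}{K}\bigr) = 0$. Hence $S^c = S$ identically.

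\textbf{Step 3.} Because the $\mathrm{gap}_c$ coefficient vanishes for any value of $\mathrm{gap}_c$, the conclusion holds for every $c$, every $\beta \ge 0$ and every distribution $\{\pi_c\}$.

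I would add a remark that $\beta \ge 0$ and $\pi_c < 1$ only ensure $\mathrm{gap}_c$ is a well-defined nonnegative real. Likewise, $M \ge 2$ keeps the normalizer $1 - 1/M$ nonzero; the one-category case $M = 1$ would need separate mention. None of these conditions is needed for the identity itself.

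I would also note a consistency check. At $\beta = 0$ we have $\mathrm{gap}_c = p_{\mathrm{a}} - p_{\mathrm{i}}$. Using $S = p_{\mathrm{a}} + (K{-}1)p_{\mathrm{i}}$, this gives $p_{\mathrm{a}}^c = p_{\mathrm{a}}$ and $p_{\mathrm{i}}^c = p_{\mathrm{i}}$, recovering the scalar form as claimed in Sec.~\ref{sec:per_category}.

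There is no real obstacle here. The only point deserving care lies outside the lemma's stated claim. Invariance of $S^c$ does not guarantee that the resulting $p_{\mathrm{i}}^c$ lies in $[0,1)$ for large $\beta$; this would require $\mathrm{gap}_c \le S$. I would flag that as a separate feasibility condition rather than part of this proof.
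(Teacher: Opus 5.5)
Your proof is correct and is the paper's argument: substitute Eq.~\ref{eq:per_category} into $S^c$, collect the $S/K$ terms into $S$, and observe that the two $\mathrm{gap}_c(K{-}1)/K$ terms cancel, so the specific form of $\mathrm{gap}_c$ is irrelevant. One small correction to your side remark on feasibility, which lies outside the lemma: $p_{\mathrm{i}}^c \ge 0$ does need $\mathrm{gap}_c \le S$, but you also need $p_{\mathrm{a}}^c \le 1$, i.e.\ $\mathrm{gap}_c \le (K{-}S)/(K{-}1)$, and it is this second bound that binds in the paper's deployed setting (see its Bounds-for-Validity remark).
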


\begin{proof}
By direct expansion,
$S^c = \tfrac{S}{K} + \mathrm{gap}_c\tfrac{K{-}1}{K} + (K{-}1)\bigl[\tfrac{S}{K} - \tfrac{\mathrm{gap}_c}{K}\bigr] = \tfrac{S}{K}\!\cdot\! K + \mathrm{gap}_c\tfrac{K{-}1}{K} - \mathrm{gap}_c\tfrac{K{-}1}{K} = S$.
\end{proof}

\begin{corollary}[Prop.~\ref{thm:fixed_denom} extends verbatim to per-category]
Replacing $p_k(c)$ in Eq.~\ref{eq:fixed_merge} with the per-category $p_k(c){=}A_{ck}p_{\mathrm{a}}^c + (1{-}A_{ck})p_{\mathrm{i}}^c$ preserves both parts of Prop.~\ref{thm:fixed_denom}: (a) $\mathbb{E}_m[\sum_k m_k h_k/S]{=}\sum_k p_k(c)h_k/S$ remains exact because $S^c{=}S$ makes $S$ still category-independent and thus valid as a fixed constant; (b) the naive-stochastic denominator still incurs the Jensen bias, now with Bernoulli variance $\mathrm{Var}_c(N){=}p_{\mathrm{a}}^c(1{-}p_{\mathrm{a}}^c) + (K{-}1)p_{\mathrm{i}}^c(1{-}p_{\mathrm{i}}^c)$ that becomes per-category.
The gradient concentration ratio of Thm.~\ref{thm:gradient} generalizes to $\rho_c{=}p_{\mathrm{a}}^c / p_{\mathrm{i}}^c$; the MI bound of Thm.~\ref{thm:mi} for Soft SpecDrop (where $Z_k$ is deterministic argmax) is unaffected since the argmax depends only on $\mathbf{A}$, not on $(p_{\mathrm{a}}^c, p_{\mathrm{i}}^c)$.
\end{corollary}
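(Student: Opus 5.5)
The statement is the corollary extending Prop.~\ref{thm:fixed_denom} (and Thm.~\ref{thm:gradient}, Thm.~\ref{thm:mi}) to the per-category probabilities of Eq.~\ref{eq:per_category}. The plan is to reduce everything to Lemma~\ref{lem:s_invariance}. That lemma states that the per-category activation probabilities still sum to the same constant $S$ for every $c$. Round-robin assignment gives each category exactly one assigned module, so $\sum_k p_k(c) = p_{\mathrm{a}}^c + (K-1)p_{\mathrm{i}}^c = S^c = S$.

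\textbf{Part (a).} Once $\sum_k p_k(c) = S$ is established, part (a) is the same linearity argument as in the proof of Prop.~\ref{thm:fixed_denom}(a), with $p_k(c)$ now category-indexed. Since $S$ is a deterministic constant, $\mathbb{E}_m[\sum_k m_k h_k/S] = \sum_k p_k(c) h_k / S$. The weights $p_k(c)/S$ sum to one, so they form a convex combination.

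Convexity also requires nonnegativity, which holds when $p_{\mathrm{i}}^c = S/K - \mathrm{gap}_c/K \ge 0$, i.e.\ $\mathrm{gap}_c \le S$. I would either impose this as a standing condition on admissible $(p_{\mathrm{a}},p_{\mathrm{i}},\beta)$ or note that the paper implicitly requires $p_{\mathrm{i}}^c \in [0,1]$ for the Bernoulli masks to be defined. I expect this to be the only genuinely delicate point. For $\beta>0$, a rare category has $(1-\pi_c)/(1-1/M) > 1$, so $\mathrm{gap}_c$ exceeds $p_{\mathrm{a}}-p_{\mathrm{i}}$. The validity $p_{\mathrm{a}}^c\le 1$, $p_{\mathrm{i}}^c\ge 0$ is therefore not automatic and must be stated as a hypothesis rather than proved.

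\textbf{Part (b).} The decomposition $\mathbb{E}[X/N] = \mathbb{E}[X]\mathbb{E}[1/N] + \mathrm{Cov}(X,1/N)$ is pure algebra and carries over unchanged. For the variance formula, the masks $m_k$ are independent Bernoullis with parameters $p_{\mathrm{a}}^c$ (one module) and $p_{\mathrm{i}}^c$ ($K-1$ modules). Summing the individual variances gives $\mathrm{Var}_c(N)$ as stated, and the Taylor expansion around $\mu_N = S$ goes through as before. The mean is still $S$ by the lemma, so the expansion point remains category-independent.

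\textbf{Remaining claims.} For the gradient ratio, I would repeat the chain-rule computation from the proof of Thm.~\ref{thm:gradient}, with $p_k(c)/S$ now equal to $p_{\mathrm{a}}^c/S$ or $p_{\mathrm{i}}^{c'}/S$. Under the same two assumptions the ratio becomes $\rho_c = p_{\mathrm{a}}^c/p_{\mathrm{i}}^c$. This holds when comparing like-indexed quantities; otherwise a cross term $p_{\mathrm{a}}^c/p_{\mathrm{i}}^{c'}$ appears, and I would make that reading explicit. The MI remark needs only the observation that the deterministic argmax indicator is $\arg\max_k p_k(c)$. That indicator equals the assigned module whenever $\mathrm{gap}_c>0$, so it is a function of $\mathbf{A}$ alone.
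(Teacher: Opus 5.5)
Your proposal is correct and follows the paper's route: everything reduces to Lemma~\ref{lem:s_invariance} ($S^c = S$), after which part (a) is the same linearity argument, part (b) is the sum of the per-category Bernoulli variances around the unchanged mean $S$, the gradient ratio comes from repeating the chain-rule computation, and the argmax claim holds because the preferred branch is fixed by $\mathbf{A}$. Your admissibility caveat ($\mathrm{gap}_c \le \min(S,(K-S)/(K-1))$) is exactly the paper's separate ``Bounds for Validity'' remark. Your observation that comparing one module across categories $c, c'$ strictly gives $p_{\mathrm{a}}^c/p_{\mathrm{i}}^{c'}$ is a fair sharpening of the paper's looser $\rho_c$.
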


\begin{remark}[Bounds for Validity]
Non-negativity $(p_{\mathrm{a}}^c, p_{\mathrm{i}}^c) \in [0,1]^2$ requires $\mathrm{gap}_c \leq \min(S, (K{-}S)/(K{-}1))$.
For our deployed NLP setting ($K{=}7$, $p_{\mathrm{a}}{=}0.6$, $p_{\mathrm{i}}{=}0.4$, $S{=}3.0$, $p_{\mathrm{a}}{-}p_{\mathrm{i}}{=}0.2$, $\beta{=}4$, $M{=}7$), $\max_c \mathrm{gap}_c \leq (p_{\mathrm{a}}{-}p_{\mathrm{i}})(M/(M{-}1))^{\beta}{=}0.2\cdot(7/6)^4{\approx}0.370$ (the deployed maximum over the $7$ domains is ${\approx}0.32$), which lies below $\min(3.0,\,(K{-}S)/(K{-}1)){=}\min(3.0,\,0.667){=}0.667$ by a $0.297$ margin; the runtime asserts this constraint and clamps at load if violated.
\end{remark}

\subsection{Proof of Theorem~\ref{thm:mi} (Routing-Indicator Mutual Information)}

\begin{proof}
We use the assumptions of the theorem: round-robin assignment with $K\,|\,M$ (so each module is assigned to exactly $M/K$ categories) and uniform $C$.
By definition, $I(Z_k; C) = H(Z_k) - H(Z_k | C)$.
Under these two assumptions, exactly $1/K$ of categories assign $p_{\text{active}}$ to module $k$ and $(K{-}1)/K$ assign $p_{\text{inactive}}$, so the conditional entropy factorizes:
$H(Z_k | C) = \frac{1}{K} H_b(p_{\text{active}}) + \frac{K-1}{K} H_b(p_{\text{inactive}})$,
where $H_b(p) = -p\log p - (1-p)\log(1-p)$ is the binary entropy.
The marginal $P(Z_k = 1) = \bar{p}$, so $H(Z_k) = H_b(\bar{p})$.
Thus $I(Z_k; C) = H_b(\bar{p}) - \frac{1}{K}H_b(p_{\text{active}}) - \frac{K-1}{K}H_b(p_{\text{inactive}})$.
Applying the identity $H_b(\bar{p}) - H_b(p) = D_{\text{KL}}(p \| \bar{p}) + (\bar{p} - p)\log\frac{1-\bar{p}}{\bar{p}}$ to each $p \in \{p_{\text{active}}, p_{\text{inactive}}\}$ and combining,
$I(Z_k; C) = \tfrac{1}{K} D_{\text{KL}}(p_{\text{active}}\|\bar p) + \tfrac{K-1}{K} D_{\text{KL}}(p_{\text{inactive}}\|\bar p) + \log\!\frac{1-\bar p}{\bar p}\cdot\!\bigl[\tfrac{1}{K}(\bar p - p_{\text{active}}) + \tfrac{K-1}{K}(\bar p - p_{\text{inactive}})\bigr].$
The bracketed remainder equals $\bar p - [\tfrac{1}{K} p_{\text{active}} + \tfrac{K-1}{K} p_{\text{inactive}}] = 0$ by the definition of $\bar p$, yielding the \emph{exact} equality (no convexity bound needed).
When $p_{\text{active}} = p_{\text{inactive}}$, $\bar{p} = p_{\text{active}}$ and all KL terms vanish.
\end{proof}

\begin{remark}[Numerical Evaluation]
At the Bernoulli-tuned configuration $(p_{\textup{active}}, p_{\textup{inactive}}) = (0.9, 0.1)$ and $K = 20$ (Stochastic variant of App.~\ref{app:stoch_soft}, $\bar{p} = 0.14$): $I(Z_k; C) \approx \frac{1}{20}(1.4595) + \frac{19}{20}(0.0073) \approx 0.080$ nats. At the deployed Soft variant configuration $(0.7, 0.3)$ ($\bar{p} = 0.32$): $I(Z_k; C) \approx \frac{1}{20}(0.302) + \frac{19}{20}(0.0009) \approx 0.016$ nats --- a smaller per-module signal at the lower $p_a/p_i = 2.33\times$ ratio. Both quantities apply to the binary Bernoulli indicator $Z_k$ and are upper-bounded by $\log 2 \approx 0.693$ nats.
In the Soft SpecDrop variant that we deploy in the main experiments, the soft weight $p_k(c)$ is used deterministically and the per-module Bernoulli quantity is no longer the natural specialization signal; the categorical \emph{argmax-branch} variable $Z_{\textup{argmax}}{:=}\arg\max_k p_k(C)\in\{1,\ldots,K\}$ is deterministic in $C$ under round-robin, giving $I(Z_{\textup{argmax}}; C) = \log K \approx 2.996$ nats exactly at $K{=}20$ (Section~\ref{sec:specialization}).
The Bernoulli per-module result and the categorical argmax result measure different quantities; substantive branch-parameter specialization in the deployed regime is therefore measured at the parameter level by pruning sensitivity, giving a $1.87\times$ diagonal-to-off-diagonal ratio at $(p_{\textup{active}}, p_{\textup{inactive}}) = (0.7, 0.3)$.
\end{remark}

\section{Extended Related Work}
\label{app:related}

\paragraph{Multi-branch architectures and task-specific masks.}
PathNet~\citep{fernando2017pathnet} uses evolutionary selection of module pathways---described as ``evolutionary dropout''---for continual learning.
Piggyback~\citep{mallya2018piggyback} and Supermasks-in-Superposition~\citep{wortsman2020supsup} learn task-specific binary masks over shared weights for task adaptation (incremental and continual settings respectively), without separate parameters per task.
MMoE~\citep{ma2018mmoe} uses task-specific learned gates over shared experts for multi-task learning, forming a natural comparison axis with SpecDrop: MMoE learns \emph{soft per-task weights} over modules end-to-end (adding trainable parameters per task), while SpecDrop uses a \emph{fixed binary assignment} with a deterministic soft-weight readout (zero extra parameters, no auxiliary losses).
All these mask-based methods learn routing \emph{after} or \emph{during} training for task adaptation, whereas SpecDrop uses a \emph{fixed, predetermined} assignment matrix \emph{during training} to shape what each module learns from scratch.

\paragraph{Theoretical foundations.}
\citet{bena2025modularity} recently showed that structural modularity alone does not guarantee functional specialization without appropriate constraints---a finding that directly motivates SpecDrop's explicit specialization pressure through category-conditioned dropout.
Our theoretical analysis (Section~\ref{sec:theory}) builds on this insight, proving that the combination of category conditioning with nonzero cross-category gradient flow ($p_{\text{inactive}} > 0$) creates provable specialization guarantees that neither structural modularity nor random dropout achieve independently.
The interaction between dropout and optimization dynamics~\citep{liu2023dropout} further suggests that stochastic masking can either help or hinder depending on timing and magnitude, consistent with our finding that deterministic category conditioning outperforms stochastic variants.

\section{Additional Experimental Results}
\label{app:additional}

\subsection{CIFAR Pruning-Sensitivity Heatmap and Routing-Level MI (sanity check)}
\label{app:mi_routing}

Section~\ref{sec:specialization} of the main text reports the diag-argmax and ratio summary statistics for CIFAR pruning sensitivity. Figure~\ref{fig:heatmap} below shows the full heatmap.

\begin{figure}[h]
\centering
\includegraphics[width=0.72\linewidth]{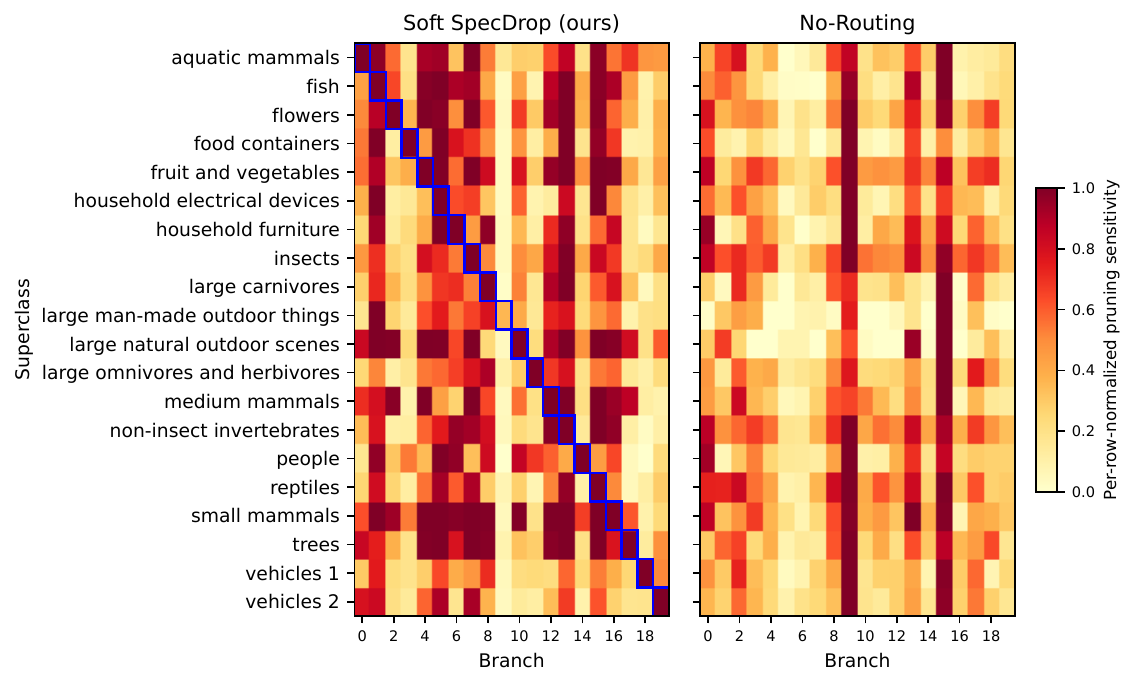}
\caption{\textbf{Pruning sensitivity on CIFAR-100 ($K{=}20$).} Rows: superclasses; cols: branches; cell darkness $\propto \Delta_{k,c}$ (Sec.~\ref{sec:specialization}); blue outlines (left panel) mark each superclass's round-robin assigned branch. \emph{Left}: Soft SpecDrop (diag-argmax $13/20$, ratio $1.87\times$). \emph{Right}: No-Routing (diag-argmax $1/20$, ratio $0.94\times$). Same backbone --- the structural difference is attributable to routing alone.}
\label{fig:heatmap}
\end{figure}

For completeness we also report the categorical argmax-branch quantity $I(Z_{\textup{argmax}}; C)$ where $Z_{\textup{argmax}}{=}\arg\max_k p_k(C)\in\{1,\ldots,K\}$: for Soft SpecDrop this is the branch with the largest $p_k(c)$, and for No-Routing it is undefined by symmetry (reported as the uniform argmax, which collapses to a single branch).
The Soft SpecDrop routing schedule is a deterministic one-to-one map $c \mapsto k(c)$ at $K{=}20$, so $I(Z_{\textup{argmax}}; C) = \log K$ is reached by construction; No-Routing's uniform weights give $I(Z_{\textup{argmax}}; C) = 0$ by symmetry.
Table~\ref{tab:mi_comparison} confirms both values empirically to machine precision.
This contrast is a sanity check on the routing layer and does \emph{not} by itself imply that the branch \emph{parameters} have specialized --- that is what the pruning-sensitivity test of Sec.~\ref{sec:specialization} measures.

\begin{table}[h]
\centering
\small
\caption{\textbf{Routing-level mutual information} on CIFAR-100 (MultiBranchResNet110, $K{=}20$, 200 epochs, 3 seeds 42/123/456). Exact by construction --- no seed variance.}
\label{tab:mi_comparison}
\begin{tabular*}{\textwidth}{@{\extracolsep{\fill}}lccc@{}}
\toprule
\textbf{Method} & \textbf{$I(Z_{\textup{argmax}}; C)$ (nats)} & \textbf{Unique argmax branches} & \textbf{Top-1 (\%)} \\
\midrule
No-Routing (equal weights)        & $0.0000$                         & $1/20$          & $63.08 \pm 0.04$ \\
\textbf{Soft SpecDrop (ours)}     & $\mathbf{2.9957}$ ($=\log 20$) & $\mathbf{20/20}$ & $\mathbf{79.23 \pm 0.17}$ \\
\bottomrule
\end{tabular*}
\end{table}

\subsection{CIFAR Fine-Label Oracle ($K{=}20$ with fine-label routing)}
\label{app:cifar_fine_label}

To bound the upper endpoint of the partition-alignment spectrum (cf.\ Sec.~\ref{sec:discussion}), we run Soft SpecDrop with $\texttt{route\_label\_type}{=}\texttt{fine}$ on CIFAR-100: $M{=}100$ fine labels, $K{=}20$ branches, round-robin so each branch covers $5$ fine labels.
This is \emph{deliberately leaky}: when $\texttt{fine\_label}{=}\texttt{cluster\_id}$ is known at routing time, the mechanism receives a $1$-of-$100$ oracle at every routing site.

\begin{center}\small
\begin{tabular*}{\textwidth}{@{\extracolsep{\fill}}lcc@{}}
\toprule
\textbf{Setting} & \textbf{Top-1 ($\pm \sigma$)} & \textbf{vs paper-canonical $K{=}20$ ours} \\
\midrule
Fine-label oracle ($M{=}100$, $K{=}20$, $5$ labels/branch) & $\mathbf{93.35 \pm 0.23}$ & $+14.12$ \\
Paper-canonical ($M{=}20$, $K{=}20$, superclass routing) & $79.23 \pm 0.17$ & --- \\
ResNet-110 dense reference & $74.48$ & $-4.75$ \\
\bottomrule
\end{tabular*}
\end{center}

\paragraph{This is not a deployable comparison.}
The $+14.12$ top-1 lift reflects \emph{oracle-routing}: at training and inference time the routing key equals the answer.
We report it as the empirical upper-bound endpoint of the partition-alignment spectrum --- when partition $\to$ answer, accuracy approaches the oracle's per-class capacity ($\sim 93\%$ on CIFAR-100).
This complements the four other settings (ViT BREEDS $46/46$ aligned $+6.53$; CIFAR-100 superclass $13/20$ partial $+4.75$; NLP SlimPajama $6/7$ fuzzy $+0.10$ PPL; LoRA SuperNI $0/15$ anti-aligned tied) to characterize the binding condition: the mechanism's contribution is directionally consistent with alignment quality across these five points.
We do \emph{not} claim this as evidence of method efficacy in any deployable sense; it is a conceptual anchor for the alignment thesis (Sec.~\ref{sec:discussion}).

\subsection{Information-Matched Logit-Masking Control}
\label{app:logit_mask}

The scope statement of Sec.~\ref{sec:discussion} rests on an information-matched control: at inference, restrict each model's logits to the fine classes of the given category (superclass on CIFAR-100, BREEDS supercategory on ImageNet) and renormalize. This gives every method, including those trained without the label, the identical inference-time information SpecDrop consumes. All rows are 3-seed means (42/123/456), evaluated on the full test/validation splits ($10{,}000$ and $50{,}000$ images).

\begin{table}[!htbp]
\centering
\small
\caption{\textbf{Logit-masking control} (3 seeds). \emph{Masked} restricts output logits to the given category's fine classes at inference; unmasked entries use the checkpoints of the provenance note below and may differ slightly from Tabs.~\ref{tab:cifar_main}--\ref{tab:vit} (by up to $0.15$). Given the same label, masking the dense model is the strongest deployment for accuracy alone on both datasets; masking changes SpecDrop by exactly $0.00$ on CIFAR and $+1.06$ on ImageNet, showing the output-space restriction is largely internalized during training. On ImageNet the masked No-Routing$+$SE control ($81.44$) also exceeds masked ours ($80.95$); on CIFAR this reverses ($78.47$ vs $79.23$).}
\label{tab:logit_mask}
\begin{tabular*}{\textwidth}{@{\extracolsep{\fill}}llccc@{}}
\toprule
\textbf{Setting} & \textbf{Method} & \textbf{Unmasked} & \textbf{Masked} & \textbf{$\Delta$} \\
\midrule
\multirow{3}{*}{CIFAR-100}
 & Dense ResNet-110        & $74.33$ & $85.23$ & $+10.90$ \\
 & No-Routing              & $63.07$ & $78.47$ & $+15.40$ \\
 & Soft SpecDrop (ours)    & $79.23$ & $79.23$ & $+0.00$ \\
\midrule
\multirow{3}{*}{ImageNet-1K}
 & Dense ViT-S/16          & $76.37$ & $83.65$ & $+7.28$ \\
 & No-Routing $+$ SE       & $73.36$ & $81.44$ & $+8.08$ \\
 & Soft SpecDrop (ours)    & $79.89$ & $80.95$ & $+1.06$ \\
\bottomrule
\end{tabular*}
\end{table}

\paragraph{Internalization statistic.}
The CIFAR $+0.00$ is exact at every seed: across three seeds ($30{,}000$ test predictions), SpecDrop predicts outside the given superclass once ($0/1/0$ per seed), versus $16.3\%$ of the time for the dense model ($16.43/16.32/16.01\%$ per seed), so the mask has nothing left to remove.

\paragraph{Checkpoint provenance.}
Two seed-42 checkpoints were unavailable and retrained from the stored configs before evaluation: the CIFAR dense s42 retrain reaches $73.97$ (original $74.41$; cross-\texttt{torch}-version drift) and the ImageNet dense s42 retrain reaches $76.23$ unmasked (original $76.45$ at 2 seeds; the retrained seed's masked gain, $+7.28$, matches the other seeds). Table entries use the retrained checkpoints; no conclusion depends on the drift.

\subsection{Label-Quality Curve and Predicted-Label Operating Points}
\label{app:label_quality}

The break-even thresholds of Sec.~\ref{sec:discussion} read off a label-quality $\to$ performance curve: CIFAR-100 top-1 as a function of category-label accuracy under symmetric label corruption (eval-only, 3 seeds), together with realistic predicted-label operating points (the dense baseline's own fine-class prediction coarsened through the hierarchy; no extra training).

\begin{table}[!htbp]
\centering
\small
\caption{\textbf{Label-quality curve} (CIFAR-100, 3 seeds, eval-only). Label accuracy under symmetric corruption at rate $p$ is $1 - \tfrac{19}{20}p$ ($5\%$ at $p{=}1$, chance). The \emph{predicted} column is a realistic operating point using the dense model's coarsened fine-class prediction as the label. SpecDrop stays above the architecture-matched No-Routing control ($63.08$) down to ${\approx}80\%$ label accuracy and above dense ($74.48$) down to ${\approx}92$--$94\%$.}
\label{tab:label_quality}
\begin{tabular*}{\textwidth}{@{\extracolsep{\fill}}lcccccccc@{}}
\toprule
Label acc. & $100\%$ & $95.2\%$ & $90.5\%$ & $83.8\%$ (pred.) & $81.0\%$ & $76.3\%$ & $52.5\%$ & $5\%$ \\
\midrule
Top-1 & $79.23$ & $75.37$ & $71.34$ & $69.33$ & $63.14$ & $59.48$ & $41.05$ & $4.02$ \\
$\pm$ std & $0.18$ & $0.13$ & $0.14$ & $0.46$ & $0.17$ & $0.18$ & $0.21$ & $0.09$ \\
\bottomrule
\end{tabular*}
\end{table}

\paragraph{ImageNet operating point.}
The corresponding predictor on ImageNet (the dense ViT's fine-class prediction coarsened through the BREEDS hierarchy) is $88.5\%$ accurate and gives $74.25 \pm 0.10$, retaining $+0.9$ over the matched-SE No-Routing control ($73.36$) under fully predicted labels; ImageNet's supercategories average $22$ fine classes, so the coarsened predictor clears the ${\approx}80\%$ bar comfortably (on CIFAR the $83.8\%$ predictor retains $+6.2$ over the matched control, $69.3$ vs $63.08$). On CIFAR's finer $5$-classes-per-category partition, predictors at the dense model's own capacity measure only ${\sim}84\%$ (coarsened $83.8\%$; a coarse head fine-tuned from the dense checkpoint reaches ${\sim}84.5\%$), so no capacity-comparable predictor reaches the ${\approx}92$--$94\%$ dense break-even there; the applicability condition of Sec.~\ref{sec:discussion} is load-bearing on such partitions.

\paragraph{Deployment cost.}
Both vision predictors share the same construction: the dense baseline's own fine-class prediction, coarsened through the respective hierarchy (the CIFAR-100 superclass tree; the BREEDS supercategory partition), with no extra training. Where the category tag does not arrive for free (in our NLP and LoRA settings it does, as domain tags and task clusters), predicting it adds a second dense-scale forward pass --- ${\approx}2\times$ the single-model inference cost of App.~\ref{app:flops} --- on CIFAR-100 and ImageNet alike.

\subsection{NLP Per-Domain Reliance Decomposition}
\label{app:nlp_specialization}

\paragraph{Per-domain reliance decomposition (source-domain labels, $\beta{=}0$/scalar, Phase~A, $K{=}7$, 100M).}
Each domain's branch sensitivity is decomposed into ``own-branch'' $\Delta_{k(d),d}$ versus ``other-branches'' $\sum_{k{\neq}k(d)} \Delta_{k,d}$.
The resulting own-reliance ratio anti-correlates with data volume:

\begin{center}\small
\begin{tabular*}{\textwidth}{@{\extracolsep{\fill}}lcccr@{}}
\toprule
\textbf{Domain} & \textbf{Data \%} & \textbf{Own-branch $\Delta$} & \textbf{Other-branches $\sum|\Delta|$} & \textbf{Own-reliance} \\
\midrule
CommonCrawl   & $53.4\%$ & $25.82$ & $42.67$ & $38\%$ \\
C4            & $18.5\%$ & $28.60$ & $43.84$ & $39\%$ \\
Github        & $\phantom{0}8.8\%$ & $\phantom{0}5.40$ & $\phantom{0}3.54$ & $60\%$ \\
StackExchange & $\phantom{0}3.8\%$ & $\phantom{0}6.59$ & $\phantom{0}6.48$ & $50\%$ \\
ArXiv         & $\phantom{0}6.6\%$ & $24.35$ & $\phantom{0}5.31$ & $82\%$ \\
Wikipedia     & $\phantom{0}4.3\%$ & $42.78$ & $\phantom{0}7.89$ & $84\%$ \\
\bottomrule
\end{tabular*}
\end{center}
Large web-scrape domains (CommonCrawl, C4) behave as \emph{generalists} that the network's whole branch pool helps represent; smaller, lexically narrower domains (ArXiv, Wikipedia) behave as \emph{specialists} whose assigned branch is nearly solely responsible for their predictions; Github and StackExchange sit between.
The asymmetry is orthogonal to the PPL--specialization decoupling: the model supports a generalist/specialist split internally without translating it to an aggregate PPL gain over the matched scalar baseline.

\subsection{NLP Mini-Ablation (full)}
\label{app:nlp_miniablation}

Section~\ref{sec:nlp_miniablation} of the main text identifies the final NLP operating point via a three-phase sequential search at 100M tokens (step-wise warmup throughout, $K{=}7$ uniform branches, 3 seeds).
The complete per-cell tables are below.

\paragraph{Phase 3a: $p_{\mathrm{a}}$ sweep at SE anchors (step warmup, $\beta{=}1$).}

\begin{center}\small
\begin{tabular*}{\textwidth}{@{\extracolsep{\fill}}lccc@{}}
\toprule
$p_{\mathrm{a}}$ & \textbf{SE=0 step} & \textbf{SE=1 step} & \textbf{SE=0.5 step} \\
\midrule
$0.5$ & $56.37 \pm 0.12$ & $55.08 \pm 0.20$ & $55.01 \pm 0.13$ \\
$0.6$ & $56.45 \pm 0.05$ & $55.14 \pm 0.11$ & --- \\
$0.7$ & $56.91 \pm 0.04$ & $55.54 \pm 0.10$ & --- \\
$0.8$ & $57.81 \pm 0.07$ & $56.25 \pm 0.12$ & --- \\
$0.9$ & $59.49 \pm 0.07$ & $57.46 \pm 0.15$ & --- \\
$1.0$ & $64.32 \pm 0.10$ & $61.37 \pm 0.16$ & --- \\
\bottomrule
\end{tabular*}
\end{center}

The SE=0.5 column reports only the degenerate $p_{\mathrm{a}}{=}0.5$ cell because it serves as the \emph{matched-SE scalar baseline} against the final ours operating point.

\paragraph{Phase 3b: $\beta$ sweep at $(p_{\mathrm{a}}, X){=}(0.6, 1)$.}

\begin{center}\small
\begin{tabular*}{\textwidth}{@{\extracolsep{\fill}}lcc@{}}
\toprule
$\beta$ & \textbf{mean PPL} & \textbf{$\sigma_{\text{mean}}$} \\
\midrule
$0$ (scalar, per-cat OFF) & $55.17 \pm 0.10$ & $0.056$ \\
$1$ & $55.14 \pm 0.11$ & $0.066$ \\
$2$ & $55.15 \pm 0.18$ & $0.107$ \\
$\mathbf{4}$ (strict argmin) & $\mathbf{55.13 \pm 0.10}$ & $\mathbf{0.057}$ \\
\bottomrule
\end{tabular*}
\end{center}

Spread across $\beta$ is $0.04$ PPL, all within one $\sigma_{\text{mean}}$; $\beta{=}0$ (scalar-per-cat reference) is the \emph{highest} mean, weak but direction-consistent evidence that per-category differentiation helps at 100M.

\paragraph{Phase 3c: $X$ (SE-ratio) sweep at $(p_{\mathrm{a}}, \beta){=}(0.6, 4)$.}

\begin{center}\small
\begin{tabular*}{\textwidth}{@{\extracolsep{\fill}}lcc@{}}
\toprule
$X$ & \textbf{mean PPL} & \textbf{$\sigma_{\text{mean}}$} \\
\midrule
$0$ & $56.46 \pm 0.03$ & $0.019$ \\
$\mathbf{0.5}$ (strict argmin) & $\mathbf{55.01 \pm 0.10}$ & $\mathbf{0.060}$ \\
$1.0$ & $55.13 \pm 0.10$ & $0.057$ \\
$2.0$ & $55.29 \pm 0.07$ & $0.042$ \\
\bottomrule
\end{tabular*}
\end{center}

Clean U-curve with minimum at $X{=}0.5$, the configuration reported in Table~\ref{tab:nlp}.

\paragraph{100M matched-SE tie, seed-paired.}
Ours at the final operating point gives $55.01\pm 0.10$ (s=42: 55.15, s=123: 54.98, s=456: 54.90); the matched-SE scalar gives $55.01\pm 0.14$ (s=42: 55.20, s=123: 54.93, s=456: 54.90).
Seed-paired differences $(-0.05, +0.05, 0.00)$ sum to zero; $z{=}0.00\sigma$, exactly tied.
The $+0.10$ PPL gap that emerges at 500M (Table~\ref{tab:nlp}) is a longer-horizon effect, not a 100M effect.

\paragraph{Warmup-granularity: step vs epoch at 100M.}
A side comparison at $(p_{\mathrm{a}}, \beta, X){=}(0.6, 2, 0.5)$ compares per-step vs per-epoch $p_{\mathrm{a}}$ warmup at 100M.
Mean PPL differs by $-0.03$ (step wins, within noise); 3-seed $\sigma$ drops from $0.15$ (epoch) to $0.08$ (step), a $2\times$ reduction that becomes $4\times$ at 500M (Sec.~\ref{sec:nlp}).
At the degenerate $p_{\mathrm{a}}{=}0.5$ point, step and epoch are mathematically identical; we verify numerically that the residual drift between archived (epoch) and new (step) 3a runs is at most $0.09$ PPL across the $p_{\mathrm{a}}{\leq}0.9$ cells, within the $\sigma_{\text{mean}}{\leq}0.12$ envelope and attributable to CUDA non-determinism (the hard-routing $p_{\mathrm{a}}{=}1.0$ cell, farthest from the degenerate point, drifts $+0.59$).

\subsection{ViT ImageNet Mini-Ablation}
\label{app:vit_miniablation}

The ViT operating point of Sec.~\ref{sec:vit} is identified by a three-phase sequential search on a $20\%$ stratified ImageNet subset (per-class), validated on the full $50$K validation split, $K{=}46$ uniform branches, $w_r{=}1.0$ cosine per-epoch warmup, $3$ seeds.

\paragraph{Phase 5a: $p_{\mathrm{a}}$ sweep at $X{=}1$, $\beta{=}1$.}

\begin{center}\small
\resizebox{\linewidth}{!}{%
\begin{tabular}{@{}lcccccc@{}}
\toprule
$p_{\mathrm{a}}$ & $0.5$ (mech-OFF) & $0.6$ & $0.7$ & $0.8$ & $0.9$ & $1.0$ \\
\midrule
Top-1 (\%) & $42.91 \pm 0.18$ & $\mathbf{55.52 \pm 0.15}$ & $55.30 \pm 0.21$ & $55.21 \pm 0.13$ & $54.94 \pm 0.19$ & $52.75 \pm 0.42$ \\
\bottomrule
\end{tabular}}
\end{center}

$p_{\mathrm{a}}{=}0.5$ excluded \emph{a priori} (algebraic mechanism-OFF). Strict argmax at $p_{\mathrm{a}}{=}0.6$.

\paragraph{Phase 5b: $\beta$ sweep at $p_{\mathrm{a}}{=}0.6$, $X{=}1$.}

\begin{center}\small
\begin{tabular*}{\textwidth}{@{\extracolsep{\fill}}lcccc@{}}
\toprule
$\beta$ & $0$ & $\mathbf{1}$ (argmax) & $2$ & $4$ \\
\midrule
Top-1 (\%) & $55.31 \pm 0.10$ & $\mathbf{55.52 \pm 0.15}$ & $55.29 \pm 0.32$ & $55.40 \pm 0.25$ \\
\bottomrule
\end{tabular*}
\end{center}

$\beta$ is a flat lever (spread $0.23$ within seed noise), $\beta{=}1$ strict argmax.

\paragraph{Phase 5c: $X$ (SE-ratio) sweep at $(p_{\mathrm{a}}, \beta){=}(0.6, 1)$.}

\begin{center}\small
\begin{tabular*}{\textwidth}{@{\extracolsep{\fill}}lcccc@{}}
\toprule
$X$ & $0$ & $0.5$ & $1.0$ & $\mathbf{2.0}$ (argmax) \\
\midrule
Top-1 (\%) & $53.40 \pm 0.35$ & $54.54 \pm 0.21$ & $55.52 \pm 0.15$ & $\mathbf{56.46 \pm 0.26}$ \\
\bottomrule
\end{tabular*}
\end{center}

$X$ is the only ViT axis with significant signal: $X{=}2.0$ provides $+0.94$ over $X{=}1.0$, indicating BREEDS' moderate semantic dispersion benefits from a $2\times$-capacity shared-expert.
The full-data ViT main-table number ($79.89$, Table~\ref{tab:vit}) at $(p_{\mathrm{a}}, \beta, X){=}(0.6, 1, 2.0)$ confirms the $20\%$-subset selection transfers to full ImageNet-1K.

\paragraph{Mini-ablation decomposition (mechanism necessity, $\beta$-plateau, SE-monotonic).}
Phases 5a/5b/5c jointly decompose the contribution of each axis on the balanced BREEDS partition.
Starting from $(p_{\mathrm{a}}, \beta, X){=}(0.5, 1, 1)$ (mechanism-OFF, $42.91$ top-1), three findings emerge.
\emph{(i)~Mechanism necessity}: $p_{\mathrm{a}}{=}0.5\to 0.6$ is $\mathbf{+12.61}$ top-1 ($42.91\to 55.52$), a discrete jump from mech-OFF to mech-ON with architecture and SE held constant.
This is the cleanest single-table evidence that the routing mechanism itself contributes, not just the $K{=}46$ multi-branch architecture --- on ViT BREEDS the mechanism is responsible for the bulk of the gain over arch-matched baselines (cf.\ Tab.~\ref{tab:vit} headline $+6.53$ over matched-SE at full ImageNet).
\emph{(ii)~$\beta$-plateau}: $\beta{\in}\{0,1,2,4\}$ all within $0.23$ top-1 (statistically tied), consistent with BREEDS being a balanced partition (each supercategory roughly equal weight) where the per-category amplification term has no leverage; $\beta$ matters only when the input distribution is meaningfully non-uniform (NLP/LoRA, where $\beta{=}4$/$1$ are selected).
\emph{(iii)~SE-monotonic}: $X{\in}\{0,0.5,1,2\}$ is monotonic, $\mathbf{+3.06}$ from $X{=}0$ to $X{=}2$, a secondary boost.
Combined: on the balanced ViT BREEDS partition, the routing mechanism contributes the bulk of the gain, SE provides a secondary monotonic boost, and per-category amplification is statistically inactive --- consistent with its motivation as an imbalance-correction term.

\paragraph{ImageNet-BREEDS label-leak disclosure.}
The BREEDS supercategory is derived from each image's fine-grained ImageNet-1K label~\citep{santurkar2021breeds}. Our routing observes a label-derived signal that the learned-routing baselines (Mod-Squad, Soft MoE, COMET) do not.
We acknowledge this asymmetry; the matched-SE No-Routing baseline (uniform $1/K$ + SE on the same K=46 branches with the same supercategory pipeline) provides a controlled mechanism-OFF reference under the same supervision and is the appropriate isolated comparison for the routing-mechanism contribution.

\paragraph{$20\%{\to}100\%$ subset transfer.}
Sec.~\ref{sec:vit} reports the final operating point $(p_{\mathrm{a}}, \beta, X){=}(0.6, 1, 2.0)$ at full ImageNet-1K, identified on the $20\%$ stratified subset.
We acknowledge that per-subset argmax may differ from full-data argmax (NLP precedent: 100M argmin $\neq$ 500M argmin in Phase 3c, where the metric is PPL); the full-data result for ours $79.89\pm 0.18$ is reported.

\paragraph{DeiT short-recipe disclosure.}
We train all ViT methods with a shortened DeiT recipe: 100~epochs, no model EMA, no RepeatedAugmentation. Full DeiT 300-ep + EMA + RA reaches $\sim 79.85\%$ for ViT-S/16 (timm reference); our short-recipe dense ViT-Small reaches $76.38\%$. All methods share this recipe identically, so relative orderings are preserved; the full DeiT 300-ep+EMA+RA recipe is out of scope for this work.

\paragraph{Mod-Squad FFN-only adaptation.}
The original Mod-Squad~\citep{chen2023modsquad} targets multi-task vision (Taskonomy, PASCAL-Context) with MoE applied to both attention and FFN. We adapt it to single-task ImageNet by treating the BREEDS-46 supercategories as the ``tasks'' for the mutual-information loss, and we restrict MoE to FFN blocks only; the original paper's ablation suggests this FFN-only restriction underestimates Mod-Squad by $\sim 0.3$--$1$ top-1. We accept this asymmetry rather than re-tune.

\paragraph{Soft MoE tuning study (placement, learning rate, granularity, compute).}
\label{app:softmoe_tuning}
The deployed all-blocks Soft MoE configuration reaches $63.06 \pm 0.22$ at full protocol; a dedicated tuning study shows this number is a placement artifact, and Table~\ref{tab:vit} accordingly reports the tuned variant as the primary Soft MoE row. On the same $20\%$ stratified selection subset used for our own operating-point search above, we sweep eight Soft MoE variants plus the ALF router and a capacity-identical Mod-Squad pair, every row at 3 seeds:

\begin{center}\small
\begin{tabular*}{\textwidth}{@{\extracolsep{\fill}}lc@{}}
\toprule
\textbf{Configuration (20\% subset, 3 seeds)} & \textbf{Top-1 (\%)} \\
\midrule
Soft SpecDrop reference                                        & $56.46 \pm 0.32$ \\
\textbf{Soft MoE, canonical 2nd-half placement $+$ lr $5{\times}10^{-4}$} & $\mathbf{52.63 \pm 0.31}$ \\
Soft MoE, canonical 2nd-half placement only                    & $51.41 \pm 0.50$ \\
ALF top-$k$ router~\citep{wang2024alf}                         & $46.66 \pm 0.23$ \\
Mod-Squad (capacity-identical pair)                            & $44.43 \pm 0.31$ \\
Soft MoE, lr $5{\times}10^{-4}$ only                           & $39.38 \pm 0.11$ \\
Soft MoE, $16$ experts $\times$ $96$-dim                       & $37.29 \pm 0.82$ \\
Soft MoE, deployed config (all blocks)                         & $37.27 \pm 0.21$ \\
Soft MoE, $8$ experts $\times$ $192$-dim                       & $36.89 \pm 0.29$ \\
Soft MoE, compute-matched ($\approx$ dense MACs; $2.7\times$ deployed) & $35.57 \pm 0.37$ \\
Soft MoE, lr $1{\times}10^{-4}$                                & $31.25 \pm 0.57$ \\
\bottomrule
\end{tabular*}
\end{center}

One change dominates: the paper-canonical second-half placement is worth $+14.1$ on the subset by itself ($51.41$ vs $37.27$), the learning rate adds $+1.2$ on top ($52.63$ vs $51.41$), and no other single change moves the number by more than $+2.1$; the lr $1{\times}10^{-4}$ and compute-matched variants hurt ($-6.0$ and $-1.7$). (The reference row's $\pm 0.32$ is a sample standard deviation; the same three runs appear as $56.46 \pm 0.26$ in Phase 5c above under its population convention.) Carried to the full protocol (identical to Tab.~\ref{tab:vit}: full data, $100$ epochs, 3 seeds), the tuned combination reaches $76.69 \pm 0.70$ (per-seed $75.92/77.30/76.86$; the seed-456 configuration was accidentally scheduled twice, and we report the run whose checkpoint and results artifacts are retained, $76.86$ --- the displaced duplicate's log records $76.73$, a $0.13$ same-seed replication gap), above dense; the compute-matched variant lands at $66.72 \pm 0.63$, below the bare No-Routing control while consuming dense-level compute, so the binding constraint of the deployed configuration was placement, not compute; and the ALF router reaches $71.09 \pm 0.32$ vs its capacity-identical Mod-Squad pair's $70.11$ ($+0.98$ at full protocol, $+2.23$ on the subset, consistent in direction), so the bias-corrected balancing helps, while both remain below the $73.36$ matched-SE control. The sweep gave Soft MoE a placement and learning-rate search that no other method in Table~\ref{tab:vit}, including ours, received.

\paragraph{Compute-matched and ALF configurations.}
The compute-matched Soft MoE widens experts from $48$ to $1600$ hidden dimensions at $5$ slots per expert, with parameters unconstrained ($481$M), restoring dense-level per-image compute ($4.26$ GMACs vs dense $4.25$; the deployed all-blocks variant runs at $1.56$ GMACs, $37\%$ of dense). The ALF router follows \citet{wang2024alf}: top-$2$ selection over $N{=}16$ experts of hidden $96$ (capacity-identical to our Mod-Squad configuration) with bias-corrected, auxiliary-loss-free load balancing (bias update rate $0.001$). Per-method MACs for all rows are in App.~\ref{app:flops}.

\paragraph{Training curves (deployed vs tuned vs ours).}
Figure~\ref{fig:training_curves} plots test top-1 across training for the deployed Soft MoE, the tuned Soft MoE, and Soft SpecDrop (3-seed means from per-epoch histories). All three converge healthily under the shared $100$-epoch budget that every Table~\ref{tab:vit} method received. The tuned baseline dominates the deployed one at every epoch, consistent with the placement finding; it also leads Soft SpecDrop through epoch $25$ before Soft SpecDrop overtakes it by epoch $50$, and the ordering is stable across the final quarter of training (epoch-$75$ gap $3.9$ vs final $3.2$).

\begin{figure}[!htbp]
\centering
\includegraphics[width=0.72\linewidth]{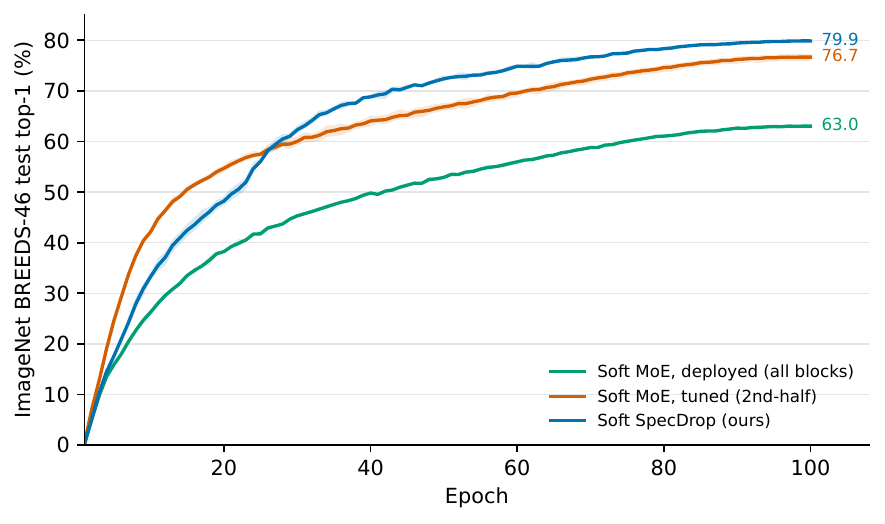}
\caption{\textbf{ImageNet BREEDS-46 test top-1 across training} for the deployed all-blocks Soft MoE, the tuned Soft MoE (canonical second-half placement $+$ lr $5{\times}10^{-4}$), and Soft SpecDrop. Lines are 3-seed means; shaded bands span seed min/max. Milestones (3-seed means) at epochs $10/25/50/75/100$: deployed $26.3/41.8/52.9/60.0/63.0$, tuned $42.1/57.5/66.8/73.6/76.7$, ours $33.3/56.1/72.4/77.5/79.9$.}
\label{fig:training_curves}
\end{figure}

\subsection{LoRA SuperNI Mini-Ablation}
\label{app:lora_miniablation}

The LoRA operating point of Sec.~\ref{sec:lora} is identified by a three-phase sequential search on a $20\%$ stratified subset of SuperNI training tasks, $K{=}20$ uniform branches, $w_r{=}1.0$ cosine per-step warmup, $3$ seeds, ROUGE-L F1 selection metric end-to-end.

\paragraph{Phase 8a: $p_{\mathrm{a}}$ sweep at $X{=}1$, $\beta{=}1$.}
$p_{\mathrm{a}}{\in}\{0.5,0.6,0.7,0.8,0.9,1.0\}$ all within $0.0030$ ROUGE-L; $p_{\mathrm{a}}{=}0.5$ (mechanism-OFF) and $p_{\mathrm{a}}{=}1.0$ (hard routing) excluded \emph{a priori}; strict non-degenerate argmax at $p_{\mathrm{a}}{=}0.8$ ($0.4784\pm 0.0090$).
The flat $p_{\mathrm{a}}$-curve at full-data fine-tune is itself diagnostic: the routing axis is muted in this regime, consistent with the LoRA decomposition (Sec.~\ref{sec:lora_decomposition}) attributing only $\sim 3\%$ of our lift to routing.

\paragraph{Phase 8b: $\beta$ sweep at $(p_{\mathrm{a}}, X){=}(0.8, 1)$.}
$\beta{\in}\{0,1,2,4\}$ all within $0.0026$ ROUGE-L; $\beta{=}1$ strict argmax (no per-category amplification benefit).

\paragraph{Phase 8c: $X$ sweep at $(p_{\mathrm{a}}, \beta){=}(0.8, 1)$.}

\begin{center}\small
\begin{tabular*}{\textwidth}{@{\extracolsep{\fill}}lcccc@{}}
\toprule
$X$ & $0$ & $0.5$ & $\mathbf{1.0}$ (argmax) & $2.0$ \\
\midrule
ROUGE-L & $0.4709 \pm 0.0050$ & $0.4728 \pm 0.0104$ & $\mathbf{0.4784 \pm 0.0090}$ & $0.4766 \pm 0.0073$ \\
\bottomrule
\end{tabular*}
\end{center}

The first axis with real signal in LoRA mini-ablation (${\sim}1\sigma$ between $X{=}0$ and $X{=}1$).
Strict argmax at $X{=}1.0$, consistent with full-data Table~\ref{tab:lora}.

\paragraph{ROUGE-L F1 selection metric.}
We use ROUGE-L F1 per \citeauthor{wang2022supernaturalinstructions}'s Tk-Instruct canonical end-to-end: best.pt selected by argmax ROUGE-L over training epochs (per-epoch generation eval); BEST $p_{\mathrm{a}}/\beta/X$ all argmax on 3-seed mean ROUGE-L; main-table reporting on argmax-ROUGE-L checkpoint.
A single metric across (1) checkpoint selection, (2) hyperparameter selection, and (3) baseline comparison ensures consistency.

\paragraph{cluster-id-at-inference requirement.}
SpecDrop-LoRA reads the cluster ID from each batch at inference; for SuperNI held-out tasks every task carries an official \citeauthor{wang2022supernaturalinstructions} Domain label, so this is unambiguous. Table~\ref{tab:lora} reports only tasks with known cluster assignments.

\paragraph{$225$M / $18\%$ trainable budget justification.}
Our LoRA configuration ($K{=}20$ branches attached to all $7$ linear projections in each transformer block; rank $r{=}16$ for the no-SE variant, $r{=}15$ plus a rank-$15$ shared expert as deployed) yields $\approx 225$M trainable parameters, $\approx 18\%$ of the Llama-3.2-1B base. This budget is substantially larger than single-LoRA defaults (typically $<$1\% at $r{=}8$) but comparable to recent multi-branch PEFT methods: LoRAMoE~\citep{dou2024loramoe} at $K{=}6{\times}r{=}92$ uses $7\%$ of Llama2-7B; MoCLE~\citep{gou2024mocle} at $E{=}4{+}1{\times}r{=}63$ uses $\sim 5\%$.
All $6$ multi-branch baselines are strictly budget-matched to $225$M ($\pm 3\%$). We do not call this ``PEFT'' in the abstract or intro; the regime is non-typical, and method comparisons remain fair within it.

\paragraph{LoRAMoE rank-extrapolation disclosure.}
LoRAMoE's native rank range~\citep{dou2024loramoe} is $r{\in}\{4, 8, 16\}$; our budget-matching constraint forces $r{=}76$ on Llama-3.2-1B. This is well outside the original paper's tested range and may dilute Dou et al. 2024's load-balance loss, which was tuned for lower-rank regimes. A sanity comparison at native $r{=}8$ (outside our $225$M budget) is deferred.

\paragraph{HydraLoRA rank-extrapolation disclosure.}
HydraLoRA's main result~\citep{tian2024hydralora} (Tian et al.\ 2024 Table~2, LLaMA-2-7B) uses $N{=}3$ B-heads at rank $r{=}8$, with $N{=}4$ reported as empirically optimal in their ablation (Section~4.5 and Figure~8, $N \in \{1, \ldots, 5\}$). We use $N{=}8$ at rank $r{=}67$ on Llama-3.2-1B to match our $225$M trainable-parameter budget; rank-matched at $r{=}8$ would yield only $\sim 27$M trainable ($\sim 12\%$ of our budget) and starve HydraLoRA's capacity. We do not perform Tian's $k$-means warm-start initialization (we use standard zero-init gate, Kaiming~$\mathbf{A}$, zero $\mathbf{B}$); $N{=}8$, the larger rank, and the no-warm-start choice are our deviations from the canonical setup.

\paragraph{LoRAMoE site-coverage confound.}
Dou et al. 2024 attaches LoRAMoE to FFN sites only ($3$ linears per block); our LoRAMoE adaptation, like ours/HydraLoRA/MoCLE, attaches to all $7$ linears for capacity-matched comparison. We attempted a $7$-site LoRAMoE variant matching their rank but it OOM'd at \texttt{batch}$=$8 on 32GB; reproducing at $\text{batch}=4$ with $\text{accum}=32$ was deferred under time constraint. The single-baseline coverage difference is acknowledged.

\paragraph{LoRA $7$-linear attachment + GQA + MoCLE adaptations.}
For fair comparison, all $6$ multi-branch LoRA methods attach adapters to all $7$ linear projections (q/k/v/o + gate/up/down) per transformer block, regardless of original-paper defaults (Hu 2022's single LoRA typically attaches to q/v only; QLoRA-recommended default). Llama-3.2-1B uses Grouped-Query Attention: q/o\_proj have dim $2048$, but k/v\_proj have dim $512$ ($8$ KV heads $\times$ $64$ head\_dim); a rank-$r$ LoRA represents $0.6\%$ on q\_proj but $2.5\%$ on v\_proj. This does not affect param budgeting (all methods share base dimensions), but we report it for completeness.
MoCLE's original $K{=}64$ gate-cluster classes are adapted to $K{=}20$ to match our cluster partition (App.~\ref{app:superni_clusters}), with the $5$ experts ($E{=}4{+}1$ universal) accessed by all $20$ cluster IDs through a dense $20$-way softmax gate (consistent with Gou 2024's own ablation showing $K{>}M$ improves only marginally).

\subsection{LoRA Aggregated Per-Method Results (3 seeds)}
\label{app:lora_per_seed}

Aggregated ROUGE-L F1 and Exact-match (mean$\pm$std over seeds $42$/$123$/$456$) for the SuperNI main-table comparison (Sec.~\ref{sec:lora}, Table~\ref{tab:lora}); per-seed JSON artifacts are released alongside the code.

\begin{center}\small
\begin{tabular*}{\textwidth}{@{\extracolsep{\fill}}lcc@{}}
\toprule
\textbf{Method} & \textbf{ROUGE-L F1 (mean$\pm$std)} & \textbf{Exact-match (mean$\pm$std)} \\
\midrule
HydraLoRA $N{=}8\,r{=}67$               & $0.5153 \pm 0.0034$ & $0.3482 \pm 0.0059$ \\
\textbf{Soft SpecDrop (ours)}            & $0.5106 \pm 0.0032$ & $0.3417 \pm 0.0017$ \\
MB-LoRA No-Routing $+$ SE ($X{=}1$)      & $0.5094 \pm 0.0072$ & $0.3378 \pm 0.0080$ \\
LoRAMoE $K{=}6\,r{=}76$                  & $0.5079 \pm 0.0022$ & $0.3350 \pm 0.0026$ \\
MB-LoRA No-Routing (no SE)               & $0.4993 \pm 0.0106$ & $0.3291 \pm 0.0130$ \\
MoCLE $E{=}4{+}1\,r{=}63$                & $0.4924 \pm 0.0103$ & $0.3289 \pm 0.0056$ \\
Single LoRA $r{=}320$                    & $0.4754 \pm 0.0072$ & $0.3196 \pm 0.0064$ \\
\bottomrule
\end{tabular*}
\end{center}

\subsection{NLP Scaling Check ($125$M Transformer)}
\label{app:scaling_check}

We replicate the 30M training regime at $125$M scale (GPT-2-small architecture: $12$ layers, hidden $768$, $12$ heads; $500$M unique SlimPajama tokens $\times$ $10$ epochs; AdamW $\text{lr}\,3{\times}10^{-4}$ cosine, bf16, batch $16 \times 512$ (halved from $30$M batch $32 \times 512$ due to memory), $3$ seeds $42/123/456$) to test whether the $30$M ours-vs-matched-SE-scalar tie persists across $\sim$$4\times$ scale.
We chose this regime over a Chinchilla-optimal $1$-epoch run at $125$M to preserve internal cross-scale comparability; the only varying factor between $30$M and $125$M is model size.

\begin{center}\small
\begin{tabular*}{\textwidth}{@{\extracolsep{\fill}}lcc@{}}
\toprule
\textbf{Method} ($125$M, 3 seeds) & \textbf{Val PPL} & per-seed (s42/s123/s456) \\
\midrule
\textbf{Soft SpecDrop $+$ SE (ours, $p_{\mathrm{a}}{=}0.6$, $\beta{=}4$, $X{=}0.5$, step)} & $33.99 \pm 0.05$ & $33.96 / 34.05 / 33.96$ \\
MB-LM No-Routing $+$ SE ($X{=}0.5$, matched scalar)                                    & $33.82 \pm 0.10$          & $33.77 / 33.93 / 33.75$ \\
\midrule
$\Delta$ (ours $-$ scalar) at $125$M     & $\mathbf{+0.17}$ & $+0.19 / +0.12 / +0.21$ \\
\bottomrule
\end{tabular*}
\end{center}

\paragraph{Reading.}
At $125$M, ours and the matched-SE scalar both drop $\sim 25\%$ absolute PPL relative to $30$M, confirming both benefit from scale. The conditional-tie direction at $30$M ($\Delta{=}{+}0.10$ on $3$ seeds) is preserved at $125$M ($\Delta{=}{+}0.17$ on $3$ seeds, same sign in $3/3$ paired seeds), consistent with the granularity-alignment thesis predicting no differential routing gain on this fuzzy partition at either scale.

\subsection{NLP $1$-Epoch Regime Sanity ($30$M)}
\label{app:nlp_regime_sanity}

To verify that the $10$-epoch multi-epoch regime (App.~\ref{app:nlp_regime_disclosure}) does not bias the cross-method orderings of Tab.~\ref{tab:nlp}, we rerun all $8$ Tab.~\ref{tab:nlp} methods at $30$M $\times$ $1$ epoch on the same $500$M-unique SlimPajama cache ($\approx 17$ tokens per parameter, Chinchilla-near-optimal), $3$ seeds each ($24$ cells), regenerating identically to the main configuration except for epoch count.

\begin{center}\small
\begin{tabular*}{\textwidth}{@{\extracolsep{\fill}}lcccc@{}}
\toprule
\textbf{Method} & \textbf{$1$-ep PPL $\pm \sigma$} & \textbf{$10$-ep PPL $\pm \sigma$} & \textbf{$\Delta$ ($1$ep$-10$ep)} & \textbf{rank @ $10$ep} \\
\midrule
Dense                                & $58.38 \pm 0.16$ & $44.80 \pm 0.05$ & $+13.58$ & $1$ \\
\textbf{Soft SpecDrop (ours)}        & $\mathbf{61.91 \pm 0.13}$ & $\mathbf{45.38 \pm 0.02}$ & $+16.53$ & $3$ \\
\textbf{No-Routing $+$ SE} (matched) & $\mathbf{61.93 \pm 0.17}$ & $\mathbf{45.28 \pm 0.10}$ & $+16.65$ & $2$ \\
No-Routing (no SE)                   & $64.29 \pm 0.15$ & $46.80 \pm 0.11$ & $+17.49$ & $4$ \\
Switch                               & $63.38 \pm 0.45$ & $49.54 \pm 0.20$ & $+13.84$ & $5$ \\
Hash Layers                          & $66.00 \pm 0.19$ & $52.05 \pm 0.06$ & $+13.95$ & $6$ \\
DEMix                                & $68.40 \pm 0.36$ & $53.31 \pm 0.08$ & $+15.09$ & $7$ \\
SMoE-Dropout ($k_{\text{init}}{=}1$) & $81.49 \pm 0.64$ & $67.32 \pm 0.50$ & $+14.17$ & $8$ \\
\bottomrule
\end{tabular*}
\end{center}

\paragraph{Reading (regime-invariant tie).}
The headline ours-vs-matched-SE-scalar $\Delta$ shrinks from $+0.10$ PPL ($10$-ep, $1.64\sigma$, $3/3$ seeds lose) to $\mathbf{-0.02}$ PPL ($1$-ep, $0.13\sigma$, $2/3$ seeds win) --- sign-flipped but \emph{still tied within seed noise}.
Cross-method orderings are preserved within seed noise: ours and matched-SE swap positions $2\leftrightarrow 3$ (within $0.02$ PPL at $1$-ep), and Switch and No-Routing-no-SE swap positions $4\leftrightarrow 5$ (multi-branch architectures benefit slightly more from over-training); Hash, DEMix, and SMoE-Dropout retain their $10$-ep ranks.
Within multi-branch, ours and matched-SE move \emph{together} ($+16.53$ vs $+16.65$ PPL gain from $10$-ep $\to$ $1$-ep), confirming the routing mechanism does not differentially benefit from extra epochs.
The conditional-tie on NLP SlimPajama is therefore a property of the data partition's fuzziness, not of the multi-epoch training regime.

\subsection{LoRA Specialization Heatmap}
\label{app:lora_diag}

Per-cluster $\times$ per-branch zero-ablation on \texttt{ours\_lora\_s42}, $K{=}20$ \citeauthor{wang2022supernaturalinstructions} domains, evaluated on $15$ test-split clusters (5 clusters had no held-out tasks).

\paragraph{Summary statistics.}
Diagonal hits $0/15$ (0\%); $\max|\Delta|{=}+0.100$ ROUGE-L (cluster $15$ $\times$ branch $4$, off-diagonal); mean $|\Delta|{=}0.014$; diag mean $\Delta{=}+0.0103$, off-diag mean $\Delta{=}+0.0049$; \textbf{diag/off-diag ratio $2.1\times$}; sign skew $152$ positive / $41$ zero / $107$ negative (of $300$ cells).

\paragraph{Reading.}
Branches carry cluster-specific information (real signal, $2.1\times$ ratio, $\max|\Delta|$ well above off-diagonal mean $0.005$) but are \emph{anti-aligned} with the imposed round-robin assignment.
Three lines of evidence:
(1)~The $0/15$ diagonal hits are not noise --- for no covered cluster does its assigned branch produce the most-negative $\Delta$ (i.e., contribute most when present); diagonal $\Delta$ medians at $+0.003$, with $9$ positive / $5$ negative / $1$ zero across $15$ clusters.
(2)~Top-$5$ most-negative cells (ablation hurts most $\Rightarrow$ strongest contribution) are all off-diagonal: cluster~$13$ relies on branches $\{19, 15\}$; cluster~$9$ relies on $\{0, 3, 19\}$.
(3)~Top-$5$ most-positive cells (ablation \emph{helps} $\Rightarrow$ branch hurts cluster) are all off-diagonal: cluster~$15$ hurt by $\{4, 11\}$ ($\Delta{=}+0.10$ each); cluster~$18$ hurt by $\{0, 10\}$.
This is consistent with the LoRA $\Delta{\approx}0$ tie under uniform-mask inference (Sec.~\ref{sec:specialization}): branches \emph{do} specialize, but the specialization structure does not match the imposed $K{=}20$ \citeauthor{wang2022supernaturalinstructions} partition, so routing-weighted output averages contributions that don't align with the cluster the input is in.
This is a richer negative result than ``no specialization'' --- the Soft SpecDrop mechanism is mechanically active, but the imposed K=20 task partition is not the right partition.
At a different K or a non-Wang-2022 clustering, alignment may be recovered (left as an explicit open question).

\subsection{LoRA Per-Task ROUGE-L Breakdown (F)}
\label{app:lora_per_task}

s42, $119$ SuperNI held-out tasks, $\text{ipt}{=}10$ instances per task (\citeauthor{wang2022supernaturalinstructions} Tk-Instruct codebase default), top-3 LoRA methods.
Mean ROUGE-L: ours $0.5137$, HydraLoRA $0.5117$, MB-LoRA no-routing $0.4962$ (within $1.0\sigma$ of 3-seed main table, consistent).

\paragraph{Sorted-$\Delta$ split (ours $-$ HydraLoRA per task).}
$47$ tasks ours wins, $46$ HydraLoRA wins, $26$ tied --- near-balanced split, mean $\Delta{\approx}+0.002$.

\emph{Right tail (ours wins, cluster-aligned tasks):}
\begin{itemize}\small
\item \texttt{task1390\_wscfixed\_coreference} ($\Delta{=}{+}0.40$)
\item \texttt{task202\_mnli\_contradiction\_classification} ($\Delta{=}{+}0.40$)
\item \texttt{task1158\_bard\_analogical\_reasoning\_manipulating\_items} ($\Delta{=}{+}0.40$)
\item \texttt{task936\_defeasible\_nli\_snli\_classification} ($\Delta{=}{+}0.30$)
\item \texttt{task033\_winogrande\_answer\_generation} ($\Delta{=}{+}0.30$)
\end{itemize}

\emph{Left tail (HydraLoRA wins, cross-cluster reasoning tasks):}
\begin{itemize}\small
\item \texttt{task1387\_anli\_r3\_entailment} ($\Delta{=}{-}0.50$)
\item \texttt{task233\_iirc\_link\_exists\_classification} ($\Delta{=}{-}0.40$)
\item \texttt{task200\_mnli\_entailment\_classification} ($\Delta{=}{-}0.40$)
\item \texttt{task020\_mctaco\_span\_based\_question} ($\Delta{=}{-}0.30$)
\item \texttt{task1152\_bard\_analogical\_reasoning\_causation} ($\Delta{=}{-}0.20$)
\end{itemize}

\paragraph{Interpretation.}
The split is the fine-grained signature of the main-table tie: ours specializes when the task aligns with one of our $K{=}20$ clusters (coreference, classification within a domain family), HydraLoRA wins on cross-cluster reasoning (multi-hop entailment, causal inference, link prediction) because its asymmetric A/B + gate routes a single example to a knowledge-mixture rather than one cluster.
The $K{=}20$ categorical structure helps when the task lives inside one cluster, hurts when it spans clusters.

\subsection{Per-Task Win-Pattern $\times$ Cluster Association ($\chi^2$ on $n{=}119$ tasks)}
\label{app:lora_fisher_per_task}

We extend the granularity-alignment test from $n{=}4$ settings to $n{=}119$ tasks by cross-tabulating the per-task ours-vs-HydraLoRA win/loss/tie (App.~\ref{app:lora_per_task}) with the \citeauthor{wang2022supernaturalinstructions} $K{=}20$ cluster\_id and applying a $\chi^2$ omnibus test plus per-cluster Fisher exact tests.

\begin{center}\small
\begin{tabular*}{\textwidth}{@{\extracolsep{\fill}}lccc@{}}
\toprule
\textbf{Test} & \textbf{Statistic} & \textbf{dof} & \textbf{$p$-value} \\
\midrule
$\chi^2$ 2-way (ours-win vs other $\times$ cluster) & $15.51$ & $14$ & $0.34$ \\
$\chi^2$ 3-way (ours / hydra / tie $\times$ cluster) & $28.83$ & $28$ & $0.42$ \\
Fisher exact, per-cluster ($15$ clusters tested)     & ---     & --- & all $p > 0.05$ \\
\bottomrule
\end{tabular*}
\end{center}

Both omnibus tests fail to reject $H_0$ (random allocation), and no individual per-cluster Fisher reaches $p < 0.05$ (closest: cluster $13$ at $p{=}0.079$, $n{=}5$).
Per-cluster heterogeneity exists descriptively (cluster $13$, $n{=}5$, ours wins $80\%$; cluster $3$, $n{=}11$, HydraLoRA wins $73\%$) but is not statistically distinguishable from chance at the available per-cluster sample sizes (median per-cluster $n{=}5$; $\chi^2$ with small expected counts loses power).

\paragraph{Independent corroboration of the LoRA anti-alignment finding.}
The null is double-corroborated by App.~\ref{app:lora_diag} ($0/15$ diagonal hits with $2.1\times$ diag/off ratio): two independent diagnostics --- per-cluster zero-ablation and per-task Fisher win-association --- both indicate that the Soft SpecDrop mechanism is mechanically active on LoRA (App.~\ref{app:lora_per_task} shows real $47/46/26$ per-task differentiation; $\max|\Delta|{=}0.10$ ROUGE-L well above noise) but the imposed $K{=}20$ \citeauthor{wang2022supernaturalinstructions} partition does not match the underlying task feature structure.
The granularity-alignment thesis at fine task grain is consistent with --- but does not statistically prove --- the cross-setting trend; the partition itself is the binding constraint, as the cross-setting and now per-task evidence jointly confirm.

\subsection{Embedding-Structure Diagnostic: BGE vs DINOv2 Modality Asymmetry}
\label{app:nlp_clusters}

To characterize whether SlimPajama text chunks carry the discrete cluster structure that categorical routing presupposes, we run a 3-metric cluster-validity scan (silhouette, Calinski-Harabasz, Davies-Bouldin) on BGE-large-en-v1.5 embeddings of $195$K $512$-token SlimPajama chunks over $k \in [2, 50]$, with DINOv2-base embeddings of CIFAR-100 images ($50$K samples) as a contrastive image-modality reference.

\paragraph{BGE embeddings lack discrete cluster structure.}
The BGE scan gives no consensus optimum: silhouette argmax at $k{=}3$ with value $0.031$, below the Kaufman-Rousseeuw $0.25$ ``substantial structure'' threshold; CH near-monotone-decreasing; DB argmin at $k{=}47$ via singleton-artifact.
DINOv2 embeddings of CIFAR-100 images give silhouette $s_{\max}{=}0.069$ at $k{=}50$ with healthy clusters (minimum cluster size $364$, no singletons).
BGE text-chunk embeddings of web text form a \emph{continuous manifold}; DINOv2 image embeddings form mildly but discretely clustered structure.
The modality asymmetry directly supports the granularity-mismatch interpretation of Sec.~\ref{sec:discussion}: categorical routing is not a universal mechanism but one that presupposes a categorizable data modality.

\paragraph{Full-corpus intra-chunk mixture and per-chunk correlation.}
For all $9{,}766$ SlimPajama validation chunks we compute (i) the intra-chunk topic mixture, slicing each $512$-token chunk into $8$ sub-windows of $64$ tokens, BGE-embedding each sub-window, and assigning it to the $k{=}7$ train-fit clusters; and (ii) the per-chunk cross-entropy difference $\Delta\text{CE}$ between the matched-SE No-Routing control and ours ($3$ seeds each).
$\mathbf{56.1\%}$ of chunks span ${\geq}2$ clusters (an earlier $300$-chunk estimate gave $53\%$; this is the full-set value), quantifying the partition's fuzziness.
The per-chunk correlation between purity and $\Delta\text{CE}$ is null (Pearson $r{=}-0.001$, $p{=}0.95$; Spearman $\rho{=}-0.001$, $p{=}0.94$), and homogeneous chunks ($43.9\%$ of the set) versus mixed chunks show statistically identical mean $\Delta\text{CE}$ ($-0.0023$ vs $-0.0021$).
This null is the expected signature of a \emph{training-time, distribution-level} property rather than an inference-time, per-chunk one: branch specialization forms over the whole training distribution, so a chunk being homogeneous at evaluation does not retroactively give it a specialized branch.
The evidence for the granularity-alignment thesis is therefore cross-setting --- partitions that are clean throughout training (CIFAR, BREEDS) yield gains, fuzzy ones yield ties --- robust to training protocol (App.~\ref{app:nlp_regime_sanity}) and scale (App.~\ref{app:scaling_check}).

\subsection{Per-Seed and Per-Domain Results}
\label{app:per_seed_domain}

Tables~\ref{tab:all_seeds}--\ref{app:nlp_per_domain} report per-seed top-1/PPL for the CIFAR and NLP main tables and per-domain PPL for three representative methods.

\begin{table}[h]
\centering
\small
\caption{Per-seed top-1 accuracy (\%) on CIFAR-100 for the faithful baseline comparison of Table~\ref{tab:cifar_main} (MultiBranchResNet110 / dense ResNet-110, $\sim$1.7M params, 200 epochs, RTX 5090).}
\label{tab:all_seeds}
\begin{tabular*}{\textwidth}{@{\extracolsep{\fill}}lcccc@{}}
\toprule
\textbf{Method} & \textbf{Seed 42} & \textbf{Seed 123} & \textbf{Seed 456} & \textbf{Mean $\pm$ Std} \\
\midrule
ResNet-110 (dense)        & 74.41 & 74.40 & 74.63 & $74.48 \pm 0.13$ \\
Stochastic Depth          & 75.91 & 75.88 & 75.62 & $75.80 \pm 0.16$ \\
Example-Tied Dropout      & 64.43 & 62.58 & 64.02 & $63.68 \pm 0.97$ \\
Contextual Dropout        & 70.19 & 70.55 & 70.00 & $70.25 \pm 0.28$ \\
No-Routing (equal weights)& 63.10 & 63.03 & 63.11 & $63.08 \pm 0.04$ \\
\textbf{Soft SpecDrop (ours)} & \textbf{79.03} & \textbf{79.31} & \textbf{79.35} & $\mathbf{79.23 \pm 0.17}$ \\
\bottomrule
\end{tabular*}
\end{table}

\begin{table}[h]
\centering
\small
\caption{Per-seed validation perplexity on SlimPajama-6B for the NLP comparison of Table~\ref{tab:nlp} (30M-parameter Transformer LM, 500M tokens, 10 epochs, RTX 5090, three seeds 42/123/456). The matched No-Routing+SE seed-456 value $45.14$ is the lowest of the three matched-SE seeds; per-seed (ours $-$ matched-SE) PPL differences are $+0.05/+0.02/+0.22$ at s42/s123/s456, so the aggregate $+0.10$ PPL gap (ours behind matched-SE) is dominated by s456.}
\label{tab:nlp_all_seeds}
\begin{tabular*}{\textwidth}{@{\extracolsep{\fill}}lcccc@{}}
\toprule
\textbf{Method} & \textbf{Seed 42} & \textbf{Seed 123} & \textbf{Seed 456} & \textbf{Mean $\pm$ Std} \\
\midrule
Dense Transformer        & 44.79 & 44.74 & 44.87 & $44.80 \pm 0.05$ \\
No-Routing + SE=0.5 (matched) & 45.36 & 45.34 & 45.14 & $45.28 \pm 0.10$ \\
\textbf{Soft SpecDrop (ours)} & 45.41 & 45.36 & 45.36 & $45.38 \pm 0.02$ \\
No-Routing               & 46.75 & 46.95 & 46.71 & $46.80 \pm 0.11$ \\
Switch Transformer       & 49.81 & 49.44 & 49.36 & $49.54 \pm 0.20$ \\
Hash Layers              & 52.13 & 52.00 & 52.01 & $52.05 \pm 0.06$ \\
DEMix                    & 53.27 & 53.24 & 53.42 & $53.31 \pm 0.08$ \\
SMoE-Dropout             & 67.91 & 66.69 & 67.37 & $67.32 \pm 0.50$ \\
\bottomrule
\end{tabular*}
\end{table}

\begin{table}[h]
\centering
\small
\caption{Per-domain validation perplexity on SlimPajama-6B (seed 42, 500M tokens, 10 epochs) for the three methods shown. Absolute per-domain difficulty varies by an order of magnitude across domains. Book domain ($4.1\%$ of training data) is omitted: the val split contains too few Book chunks at the $500$M-token scale for a stable per-domain PPL estimate.}
\label{app:nlp_per_domain}
\begin{tabular*}{\textwidth}{@{\extracolsep{\fill}}lcccccc@{}}
\toprule
\textbf{Method} & \textbf{CC} & \textbf{C4} & \textbf{Github} & \textbf{ArXiv} & \textbf{Wiki} & \textbf{Stack} \\
\midrule
Dense                  & 62.84 & 67.85 &  6.02 & 11.59 & 37.71 & 12.93 \\
Switch                 & 70.27 & 75.47 &  6.52 & 12.60 & 42.68 & 14.07 \\
Hash Layers            & 73.97 & 79.16 &  6.69 & 13.10 & 43.24 & 14.55 \\
\bottomrule
\end{tabular*}
\end{table}

\section{Architecture and Implementation Details}
\label{app:arch}

\subsection{MultiBranchResNet110 Architecture}

For CIFAR-100 experiments at ResNet-110 scale, we use \texttt{MultiBranchResNet110} where all three layer groups are branched, mirroring the every-layer MoE design of modern LLMs~\citep{jiang2024mixtral}:
\[
\text{conv1 (shared)} \to K\!\times\!\text{layer1} \to \text{merge} \to K\!\times\!\text{layer2} \to \text{merge} \to K\!\times\!\text{layer3} \to \text{merge} \to \text{FC}
\]
With \texttt{num\_blocks}$\,{=}\,18$, this gives ResNet-110 depth (3 groups $\times$ 18 blocks $\times$ 2 convs + 2 = 110 layers).
Branch channel widths are auto-computed to match single-branch ResNet-110 parameter count ($\sim$1.74M): with $K\!{=}\!20$, branch channels are $[4, 7, 14]$ per layer group.
The always-on branch (a branch with $p_{\text{active}} = 1.0$ for all categories) uses the same channel widths as routed branches but may have a different block count, controlling capacity without requiring dimension projection.
This is a natural extension of the routing framework---not a separate architecture---and is parameter-matched by reducing routed branch widths accordingly.
The design is inspired by the shared expert in DeepSeekMoE~\citep{dai2024deepseekmoe}.

\subsection{Efficient Multi-Branch Computation}
\label{app:efficiency}

A naive implementation of $K$ parallel branches uses a Python \texttt{for}-loop, launching $K$ sequential CUDA kernels per layer.
With $K\!{=}\!20$ branches, 3 layer groups, 18 blocks per group, and the standard ResNet \texttt{BasicBlock}'s 2 convolutions per block, this results in $3 \times 20 \times 18 \times 2 = 2{,}160$ sequential kernel launches per forward pass---over an order of magnitude slower than a single-branch ResNet-110 on a modern GPU.

\paragraph{CNN: Grouped convolution.}
We fuse all $K$ branch convolutions into a single grouped convolution (\texttt{groups}$\,{=}\,K$).
The shared input is repeated along the channel dimension: $(B, C, H, W) \to (B, KC, H, W)$.
Each group processes its $C$ input channels independently with its own filter set, producing $(B, KC_{\text{out}}, H, W)$ in one CUDA kernel call.
\texttt{BatchNorm}$(K\,C_{\text{out}})$ naturally provides per-branch normalization since each group of $C_{\text{out}}$ output channels has independent statistics ($\gamma$, $\beta$, running mean/var).
This reduces kernel launches from $2{,}160$ to $\sim 108$ (two grouped convs per block, $54$ blocks total).

\paragraph{NLP: Batched einsum.}
For transformer FFN branches, we stack $K$ weight matrices into tensors $\mathbf{W}_1 \in \mathbb{R}^{K \times F \times D}$ and $\mathbf{W}_2 \in \mathbb{R}^{K \times D \times F}$, then compute all branches simultaneously via \texttt{torch.einsum}:
\[
\begin{aligned}
\mathbf{H} &= \text{GELU}(\texttt{einsum}(\text{`btd,kfd}\to\text{btkf'}, \mathbf{X}, \mathbf{W}_1) + \mathbf{b}_1),\\
\mathbf{O} &= \texttt{einsum}(\text{`btkf,kdf}\to\text{btkd'}, \mathbf{H}, \mathbf{W}_2) + \mathbf{b}_2
\end{aligned}
\]
This replaces $K$ sequential matrix multiplications with two batched operations.

\paragraph{Additional optimizations.}
We use mixed-precision training (\texttt{torch.amp}) and \texttt{torch.compile} for kernel fusion.
Combined with grouped convolution, these achieve \textbf{11.0$\times$ speedup} over the naive for-loop implementation, reducing per-batch time from 1,387ms to 126ms (measured on an NVIDIA A100 during development).

\paragraph{Equivalence verification.}
Both optimizations are mathematically equivalent to the naive implementation:
(1)~forward outputs match within \texttt{atol{=}1e-5} on CPU (\texttt{tests/test\_grouped\_conv.py});
(2)~gradient differences pass a \texttt{1e-3} threshold (typical observed magnitude $<10^{-4}$, the threshold accommodates cuDNN nondeterminism);
(3)~a 20-epoch training comparison yields final accuracy within seed noise (verified by \texttt{archive/tests/test\_grouped\_training\_equivalence.py}).

\begin{table}[h]
\centering
\small
\caption{Computational overhead of multi-branch architectures vs.\ single-branch baselines. All times are forward+backward per batch, measured on an NVIDIA A100 during implementation development; the speedup columns compare implementations on the same device. End-to-end training wall-clock on the RTX 5090 production hardware is reported in App.~\ref{app:flops}.}
\label{tab:efficiency}
\begin{tabular*}{\textwidth}{@{\extracolsep{\fill}}llrrr}
\toprule
\textbf{Setting} & \textbf{Method} & \textbf{Time (ms)} & \textbf{vs Single} & \textbf{vs For-loop} \\
\midrule
\multirow{4}{*}{CV (K=20)} & ResNet-110 & 68.5 & 1.0$\times$ & --- \\
& MultiBranch (for-loop) & 1,387 & 20.2$\times$ & 1.0$\times$ \\
& + Grouped conv & 198 & 2.9$\times$ & 7.0$\times$ \\
& + AMP + compile & 126 & 1.8$\times$ & 11.0$\times$ \\
\midrule
\multirow{3}{*}{NLP (K=7)} & Dense Transformer & 25.4 & 1.0$\times$ & --- \\
& MoE+SE (for-loop) & 44.0 & 1.7$\times$ & 1.0$\times$ \\
& MoE+SE (einsum) & 31.3 & 1.2$\times$ & 1.4$\times$ \\
\bottomrule
\end{tabular*}
\end{table}

\subsection{Per-Method Compute (MACs) and Wall-Clock}
\label{app:flops}

We report per-forward compute for every method in the four main tables as \emph{multiply--accumulate operations} (MACs): one MAC is one multiplication plus one addition, so $\text{FLOPs} \approx 2 \times \text{MACs}$. MACs are counted with \texttt{fvcore}'s operator-level counter, the convention used by the ViT/DeiT/Soft MoE reference implementations. SpecDrop's fixed routing adds zero MACs (no router network); we claim no compute advantage.

\begin{table}[!htbp]
\centering
\small
\caption{\textbf{Per-method MACs across the four settings.} CIFAR: per $32{\times}32$ image; ImageNet: per $224{\times}224$ image; SlimPajama: per $512$-token sequence; SuperNI: adapter add-on per token as \% of the frozen Llama-3.2-1B base (${\sim}1.24$ GMACs/token, shared by all methods). The deployed all-blocks Soft MoE runs at $37\%$ of dense compute under parameter matching; the tuned second-half variant at $68\%$; the compute-matched variant restores dense-level MACs with parameters unconstrained.}
\label{tab:macs}
\begin{tabular*}{\textwidth}{@{\extracolsep{\fill}}llr@{}}
\toprule
\textbf{Setting} & \textbf{Method} & \textbf{MACs} \\
\midrule
\multirow{2}{*}{CIFAR-100}
 & dense-backbone baselines & 255.3M \\
 & MultiBranch $K{=}20$ (No-Routing, HardCategory, ours) & 287.6M ($+12.7\%$) \\
\midrule
\multirow{6}{*}{ImageNet-1K}
 & Dense ViT-S/16 & 4.25G \\
 & Soft SpecDrop (ours) / No-Routing$+$SE & 4.25G ($\pm 0.0\%$) \\
 & No-Routing (no SE) & 4.22G \\
 & Mod-Squad / ALF top-$k$ router & 4.27G \\
 & COMET & 5.65G \\
 & Soft MoE: deployed / tuned / comp.-matched & 1.56G / 2.91G / 4.26G \\
\midrule
SlimPajama & all eight methods, $15.93$--$15.97$G & within $0.3\%$ \\
\midrule
\multirow{4}{*}{SuperNI/LoRA}
 & Single LoRA / No-Routing & $+18.2\%$ \\
 & Soft SpecDrop (ours) & $+18.0\%$ \\
 & LoRAMoE / MoCLE & $+18.1\%$ / $+7.2\%$ \\
 & HydraLoRA & $+30.6\%$ \\
\bottomrule
\end{tabular*}
\end{table}

\begin{table}[!htbp]
\centering
\small
\caption{\textbf{Measured wall-clock, all four settings} (3-seed mean total training hours on RTX 5090, bf16; ratios vs the per-setting dense/single reference). Ours is within ${\sim}0.5\%$ of the architecture-matched No-Routing control on three of four settings and within $7\%$ on CIFAR: the routing rule adds little to no measurable cost; the cost is the multi-branch architecture, shared by ours and the No-Routing controls. The three new ImageNet baselines (bottom) were trained on identical-model GPUs on a separate node; per-epoch seconds are directly comparable to the deployed Soft MoE's $502$\,s.}
\label{tab:wallclock_full}
\begin{tabular*}{\textwidth}{@{\extracolsep{\fill}}llcc@{}}
\toprule
\textbf{Setting} & \textbf{Method} & \textbf{Wall (h)} & \textbf{$\times$ ref} \\
\midrule
\multirow{6}{*}{CIFAR-100 (200 ep)}
 & ResNet-110 (ref) & $0.67$ & $1.00\times$ \\
 & Example-Tied Dropout & $0.74$ & $1.11\times$ \\
 & No-Routing & $0.95$ & $1.43\times$ \\
 & \textbf{ours} & $1.01$ & $1.51\times$ \\
 & Contextual Dropout & $1.13$ & $1.70\times$ \\
 & Stochastic Depth & $1.98$ & $2.96\times$ \\
\midrule
\multirow{7}{*}{ImageNet ViT (100 ep)}
 & Soft MoE (deployed) & $13.95$ & $1.00\times$ \\
 & ViT-S/16 (ref) & $14.00$ & $1.00\times$ \\
 & COMET & $15.95$ & $1.14\times$ \\
 & No-Routing$+$SE & $27.30$ & $1.95\times$ \\
 & \textbf{ours} & $27.44$ & $1.96\times$ \\
 & No-Routing (no SE) & $28.66$ & $2.05\times$ \\
 & Mod-Squad & $58.41$ & $4.17\times$ \\
\midrule
\multirow{8}{*}{SlimPajama (10 ep)}
 & Dense Transformer (ref) & $5.12$ & $1.00\times$ \\
 & No-Routing & $5.42$ & $1.06\times$ \\
 & No-Routing$+$SE (matched) & $6.06$ & $1.18\times$ \\
 & \textbf{ours} & $6.08$ & $1.19\times$ \\
 & DEMix & $7.07$ & $1.38\times$ \\
 & Hash Layers & $7.24$ & $1.42\times$ \\
 & SMoE-Dropout & $7.78$ & $1.52\times$ \\
 & Switch & $8.21$ & $1.60\times$ \\
\midrule
\multirow{7}{*}{SuperNI/LoRA (3 ep)}
 & Single LoRA $r{=}320$ (ref) & $3.68$ & $1.00\times$ \\
 & LoRAMoE & $7.69$ & $2.09\times$ \\
 & MoCLE & $9.28$ & $2.52\times$ \\
 & HydraLoRA & $10.41$ & $2.83\times$ \\
 & No-Routing (no SE) & $14.70$ & $3.99\times$ \\
 & No-Routing$+$SE & $16.33$ & $4.43\times$ \\
 & \textbf{ours} & $16.39$ & $4.45\times$ \\
\midrule
\multirow{3}{*}{New ImageNet baselines}
 & Soft MoE (tuned, 2nd-half) & $13.36$ & $481$\,s/ep \\
 & Soft MoE (compute-matched) & $19.71$ & $710$\,s/ep \\
 & ALF top-$k$ router & $64.87$ & $2335$\,s/ep \\
\bottomrule
\end{tabular*}
\end{table}

\end{document}